\pdfoutput=1 
\documentclass[11pt]{article}
\usepackage{amsmath,amssymb,amsthm,mathtools}
\usepackage[margin=1in]{geometry}
\usepackage{graphicx}
\usepackage[hidelinks]{hyperref}
\hypersetup{pdftitle={Poor Man's Agentic Modeling: Simulating Large LLM-Agent Societies on a Laptop},
  pdfauthor={Igor Itkin}}
\usepackage{microtype}
\usepackage{algorithm}
\usepackage{algpseudocode}
\newtheorem{proposition}{Proposition}
\newtheorem{lemma}{Lemma}
\theoremstyle{definition}

\newtheorem{consequence}{Consequence}
\theoremstyle{remark}

\newcommand{\R}{\mathbb{R}}

\newcommand{\E}{\mathbb{E}}
\newcommand{\Var}{\operatorname{Var}}
\DeclareMathOperator{\tanhh}{tanh}

\title{\bf Poor Man's Agentic Modeling:\\
Simulating Large LLM-Agent Societies on a Laptop}
\author{Igor Itkin\\
\small Independent Researcher, Tel Aviv, Israel\\
\small \texttt{ig.itkin@gmail.com}\\
\small ORCID: \href{https://orcid.org/0009-0004-9513-8463}{0009-0004-9513-8463}}
\date{July 2026}

\begin{document}
\maketitle

\begin{abstract}
Simulating societies of many large language model (LLM) agents is expensive, yet the questions
asked of such simulations are usually macroscopic: phase behaviour, stylised facts, and scaling
with the number of agents $N$, not the cognition of any single agent. We turn a statistical-physics
observation into a method: replace each LLM agent by a low-parameter model fitted from a few hundred to a few thousand cheap
queries, then run the society at any $N$ on a laptop. Whether this works is decided before the
simulation runs, chiefly by what each agent perceives. We introduce an
[interaction order $\times$ memory] taxonomy that maps perception and memory to an effective theory
and a predicted $N$-trend of the surrogate error. We validate it on a faithful reimplementation of
the LLM macroeconomy EconAgent and seven further named LLM simulations, with agent decisions cloned
from genuine LLM elicitations (primarily DeepSeek) for a few dollars; the predicted error trends
hold cell by cell, and the two refuted predictions, both on a strongly saturating response and traced
to its curvature, are themselves matched quantitatively by the theory with no free parameters.
\end{abstract}

\medskip
\noindent\textbf{Keywords:} LLM multi-agent systems; agent-based modelling; mean-field theory;
finite-size scaling; surrogate models; behavioural cloning; recommender systems; sociophysics.

\section{Introduction}\label{sec:intro}

A growing body of work builds simulations in which each agent is a large language model: generative
towns, LLM macroeconomies, social-media societies, and epidemic models with reasoning citizens.
These simulations are valuable because they reproduce human-like macroscopic behaviour (business
cycles, opinion polarisation, epidemic waves) without hand-coded behavioural rules. They are also
expensive. A single run of a thousand-agent society can cost tens of dollars and tens of hours in
API calls, which places systematic study, and in particular the study of how behaviour scales with
the number of agents $N$, out of reach for most researchers.

The expense buys per-agent cognition, yet the scientific questions posed to these simulations are
almost always macroscopic: does a phase transition occur, what are the stylised facts of the
aggregate, how does an observable scale with $N$? For such questions a century of statistical
physics offers a lesson: the macroscopic behaviour of a large interacting system is governed by a
few collective variables, and most microscopic detail is irrelevant to it. If that stance applies
to LLM societies, then for macroscopic purposes each expensive agent can be replaced by a cheap
surrogate whose few parameters are fitted from a small number of queries, and the society can be
studied at any $N$ on a laptop.

Whether the stance applies is not automatic, and the contribution of this paper is a criterion for
when it does. We argue that the deciding property is \emph{what each agent perceives}. An agent that
reacts to one population-wide aggregate (an inflation rate, a global trending feed) sits in a
mean-field regime, and a scalar surrogate reproduces the macroscopic observable with an error that
vanishes as $N^{-1/2}$. An agent that reacts to a signal shared only within its community, or only
to its graph neighbours, sits in a regime where the same scalar surrogate carries an error that
does not vanish, and may even grow with $N$. The taxonomy of Section~\ref{sec:framework} makes this
precise by mapping the perception and memory design of a simulation to a cell, an effective theory,
and a predicted $N$-trend of the surrogate error. Because the perception design of a social
simulation is in practice set by its recommender, the recommender is the switch between the regimes
in which cheap modelling succeeds and those in which it fails.

We validate the criterion in three ways. First, on a faithful, code-authoritative reimplementation
of the LLM macroeconomy EconAgent, we show that the surrogate reproduces the target's macroscopic
signatures and that doing so exposes what those signatures do and do not measure. Second, we
falsify the taxonomy directly: its cell assignments are pre-registered and then tested blind, both
on held-out contact graphs and on the measured response function of a real LLM. Third, we classify
and reproduce eight named LLM simulations spanning all three perception cells: EconAgent, AgentTorch,
OASIS, AgentSociety, De Marzo et al.'s consensus game, Williams et al.'s generative
epidemic, LLMTraveler's congestion game, and Generative Agents' Smallville (with TwinMarket's financial stylised facts as a
documented boundary case), together with a cross-domain check against a differentiable agent-based model, using
agent responses cloned throughout from genuine LLM decisions (DeepSeek by default; six models for the
De Marzo cross-model test, and GPT-4o alongside DeepSeek for TwinMarket).

Two findings of independent interest emerge. EconAgent's frequently cited reproduction of Okun's
law turns out to be an accounting identity that a behaviour-free policy already satisfies, whereas
its Phillips curve is a genuine behavioural signature carried by a single labour-cyclicality
coefficient; we estimate that coefficient from cloned decisions and recover the macroscopic value
as an out-of-sample prediction. And when the pipeline is driven by genuine LLM decisions, a
$2\times2$ ablation isolates the reasoning step---not the wording of the prompt---as the cause of
the emergent Phillips curve, so the cheap surrogate becomes an instrument: what makes the macro law
appear is itself measurable.

\section{Related work}\label{sec:related}

We connect several literatures that are rarely joined; a fuller map of all 115 verified sources is given in the companion review~\cite{review}. \emph{Expensive LLM simulations} such as Generative Agents~\cite{genagents}, EconAgent~\cite{econagent}, OASIS~\cite{oasis}, and
AgentSociety~\cite{agentsociety} establish that LLM societies reproduce human-like macroscopic phenomena and define the
observables a surrogate must hit, but none is paired with a low-parameter model whose scaling is
analysed.

\emph{Sociophysics} supplies the off-the-shelf few-parameter rules (voter and Ising
models, bounded-confidence dynamics, kinetic opinion equations, and active-matter flocking dynamics~\cite{tonertu}), leaving open which rule a given LLM
agent realises. \emph{Coarse-graining and equation learning}, from mean-field reductions to
equation-free methods and archetype-based population models, compress dynamics but stop short of the
finite-$N$ closure problem and of LLM-specific targets. \emph{Mean-field game theory}~\cite{mfg} and
\emph{heterogeneous-agent macroeconomics}~\cite{krusell} provide the $N\to\infty$ limit and three cautions we
adopt: one moment is often enough but misses the tails, small-$N$ exponents can agree by accident,
and a shared driver can manufacture a power law without a phase transition. Finally, \emph{closure
theory} (the Mori--Zwanzig memory kernel~\cite{moriz}, the BBGKY hierarchy~\cite{bbgky}, graphons~\cite{graphon}, and inverse-Ising
methods) supplies the machinery that our taxonomy invokes cell by cell.

The closest prior work is MF-LLM~\cite{mfllm}, which couples a population-level mean field to per-agent LLM decisions; it
keeps the LLM in the loop and analyses no scaling, whereas we replace the agent and study the
macroscopic observable as $N$ varies. The contribution of this paper is the join that no prior work
makes: a low-parameter surrogate per agent, fitted from genuine LLM decisions elicited on a named
LLM simulation's own prompts to reproduce its macroscopic observable, together with a classificatory
layer that predicts the
$N$-trend of the surrogate error and is validated blind.

\section{The perception-ordered taxonomy}\label{sec:framework}

\begin{figure}[t]
  \centering
  \includegraphics[width=\linewidth]{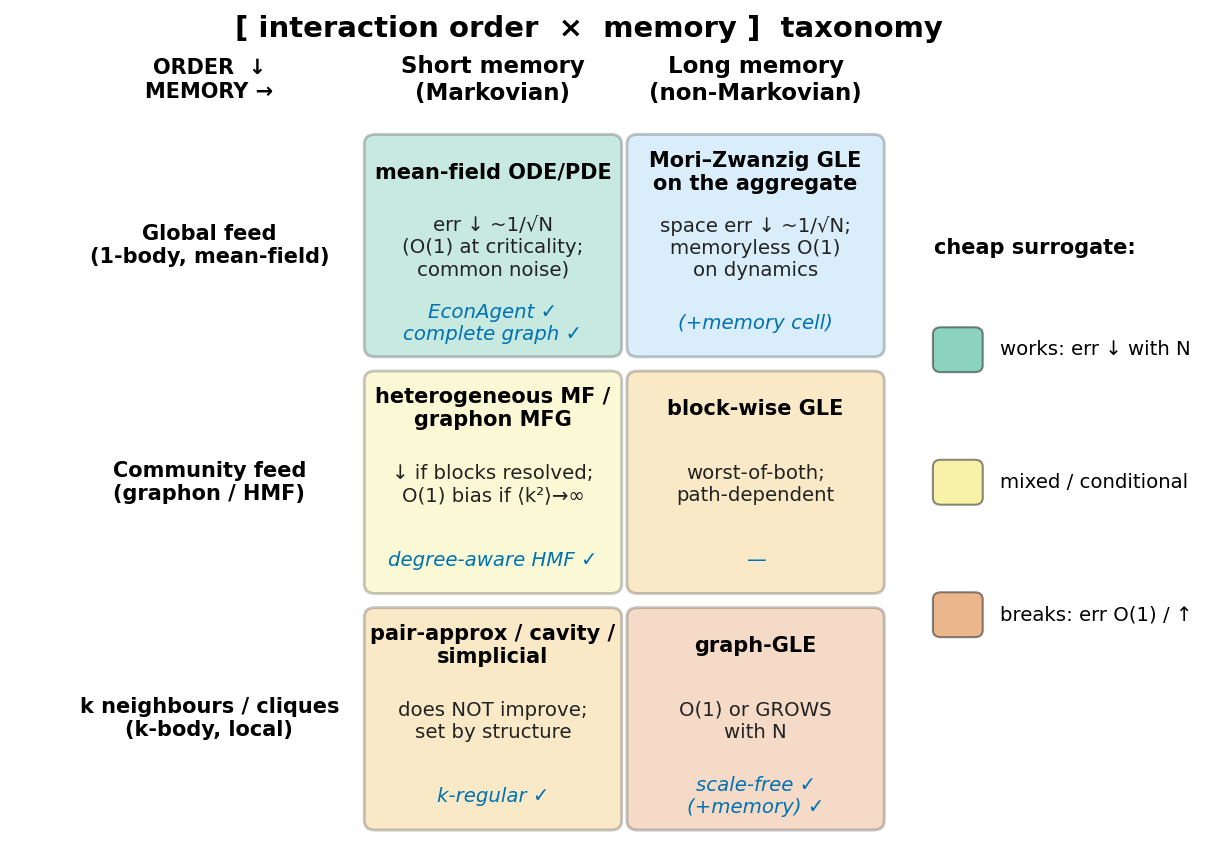}
  \caption{\textbf{The perception cell sets the scaling of the surrogate error.} A simulation's
  perception and memory design places it in one cell of the [interaction order $\times$ memory]
  taxonomy. Each cell lists three things: its effective theory (top, bold), the predicted trend of
  the scalar-surrogate error with $N$ (middle), and the system(s) that validate that cell here
  (bottom, italic). The tint marks whether the cheap surrogate works (error falls with $N$), is
  conditional, or breaks (error stays $O(1)$ or grows), as keyed at the right.}
  \label{fig:taxonomy}
\end{figure}

Let $\Phi_t$ denote the microscopic update of the LLM society and $P$ the projection onto the
macroscopic observable of interest. A cheap surrogate replaces $\Phi_t$ by a low-parameter map
$\hat\Phi_t$ acting on the projected variables. The surrogate reproduces the observable exactly
when coarse-graining commutes with the dynamics, $P\Phi_t = \hat\Phi_t P$; in general it does not,
and the size of the commutation defect $\lVert P\Phi_t - \hat\Phi_t P\rVert$ is what the taxonomy
predicts. We call the resulting gap in the macroscopic observable the \emph{surrogate error}; the
commutation defect is the one-step, map-level quantity that drives it, and the two scale together
(they coincide at the mean-field cell).

In plain terms, the surrogate replaces every agent by a single average agent, so it can be right
only to the extent that the population is well summarised by its average, and what decides that is
what each agent looks at. When every agent reacts to the same population-wide quantity (an inflation
rate, a global trending feed), the agents move together, the mean is a sufficient statistic, and the
only error is sampling noise that washes out as $1/\sqrt N$. When each agent instead reacts to a
private draw or to its own neighbourhood, the agents no longer share an input; averaging discards the
structure that actually drives them, and the error stops falling with $N$: it plateaus, or even
grows. Perception thus decides whether averaging is legitimate at all, and memory whether the past must
be tracked.

We organise this defect along two coarse-graining axes: interaction order and memory.

The first is the \emph{interaction order}: how many other agents feed into a single agent's decision. A
global aggregate feed is order zero (every agent sees the same population statistic) and yields a
mean-field theory in which the scalar surrogate error is set by sampling noise and vanishes as
$N^{-1/2}$. A community feed, shared within each of a fixed number of blocks, is a heterogeneous
mean field: the error no longer vanishes but falls only with the number of blocks, leaving an
$O(1)$ floor in $N$. A local feed, restricted to graph neighbours, is genuinely $k$-body, and the
error is controlled by the degree structure rather than by $N$. The second is \emph{memory}:
whether an agent's decision depends only on current inputs or on an accumulated internal state. Long
memory makes the dynamics non-Markovian and requires a memory kernel in the closure.

These two axes define the cells of Figure~\ref{fig:taxonomy}, and each cell names both a predicted
$N$-trend of the surrogate error and the minimal closure that removes it: a scalar mean field for
the global cell, a block or graphon mean field for the community cell, and a \emph{pair
approximation} (a moment closure that tracks two-agent correlations rather than only single-agent
means~\cite{pairapprox}) with a memory kernel for the local, long-memory cell.

Two further axes refine the picture and are demanded by data presented later. A shared driver that itself fluctuates makes the mean field
random, so its fluctuations fall more slowly than the naive rate. And a strongly curved per-agent
response makes coarse-graining fail via Jensen's inequality even under a private feed, because
the average of a nonlinear response is not the response at the average. Collecting these,
$\lVert P\Phi_t - \hat\Phi_t P\rVert$ is bounded, heuristically, by a sum of an interaction-order
term, a memory term, and a response-curvature term. The first two are the classical BBGKY and
Mori--Zwanzig contributions, and the third is the curvature axis that Section~\ref{sec:perception}
isolates on a real LLM. We treat this decomposition as an organising heuristic rather than a
theorem: the terms are not derived and the constant is not bounded.

\subsection{An exactly solvable case}\label{sec:theory}

For a tractable class of agents, all three terms in the commutation bound
$\lVert P\Phi_t-\hat\Phi_t P\rVert$ are exact rather than heuristic, and working out the perception
term yields a quantitative prediction we confirm later on a real LLM (Section~\ref{sec:perception}). Let $A(g)=\E_x f(x,g)$
be the infinite-population aggregate response to a perceived signal $g$, and suppose the population
is partitioned into $B$ equal communities, community $b$ perceiving $g^\ast+\delta_b$ with
$\delta_b$ independent and $\mathcal N(0,\sigma^2)$. Write $D=A(g^\ast+\delta)-A(g^\ast)$ for a single
community, with mean $m_1=\E D$ and variance $v_1=\Var D$. Since $\E\delta=0$, Jensen's
inequality~\cite{jensen1906} makes $m_1=\E[A(g^\ast+\delta)]-A(g^\ast)$ nonzero for small $\sigma$
whenever $A''(g^\ast)\neq0$ (to leading order $m_1=\tfrac12 A''(g^\ast)\sigma^2$); we call it the
\emph{Jensen bias}.
The scalar mean-field surrogate predicts $A(g^\ast)$, and the realised $B$-community aggregate is
$\tfrac1B\sum_b A(g^\ast+\delta_b)=A(g^\ast)+\tfrac1B\sum_b D_b$, so its error is
$\bigl\lvert\tfrac1B\sum_b D_b\bigr\rvert$.

\begin{proposition}[Community floor]\label{prop:floor}
If $A$ is affine, then
\[
  \mathrm{Floor}(B)=\E\Bigl\lvert\tfrac1B\textstyle\sum_b D_b\Bigr\rvert
  = \E\bigl\lvert\mathcal N(m_1,\,v_1/B)\bigr\rvert;
\]
that is, the scalar mean-field surrogate error floor equals exactly the mean of a \emph{folded normal}
distribution~\cite{leone1961}. More generally, for nonlinear $A$ the same expression gives the
leading-order asymptotic approximation to the floor.
\end{proposition}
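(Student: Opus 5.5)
The affine case follows directly from Gaussian closure under linear maps; for nonlinear $A$ we Taylor-expand $A$ and apply a central limit argument, which is where the real work lies.

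\emph{Affine case.} Suppose $A(g)=\alpha+\beta g$. Then $D_b=A(g^\ast+\delta_b)-A(g^\ast)=\beta\delta_b$ exactly.

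1. Since the $\delta_b$ are independent $\mathcal N(0,\sigma^2)$, each $D_b$ is $\mathcal N(0,\beta^2\sigma^2)$, so $m_1=0$ and $v_1=\beta^2\sigma^2$.
2. Independent Gaussians sum to a Gaussian, so $\tfrac1B\sum_b D_b\sim\mathcal N(m_1,v_1/B)$ exactly.
3. Taking the expectation of the absolute value gives the folded-normal mean, which we may write in closed form as
\[
\E\lvert\mathcal N(\mu,s^2)\rvert = s\sqrt{2/\pi}\,e^{-\mu^2/(2s^2)}+\mu\bigl(1-2\Phi(-\mu/s)\bigr).
\]

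Here $m_1=0$, so the floor reduces to $\sqrt{2v_1/(\pi B)}$. The point to stress is that the identity is exact in distribution for every $B$, with no approximation.

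\emph{Nonlinear case.} The $D_b$ are still i.i.d. with mean $m_1$ and variance $v_1$, finite provided $A$ has at most polynomial growth. They are no longer Gaussian, so the argument has two parts.

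1. Expand $D=A'(g^\ast)\delta+\tfrac12A''(g^\ast)\delta^2+O(\delta^3)$. This shows $m_1=\tfrac12A''\sigma^2+O(\sigma^4)$ and $v_1=A'^2\sigma^2+O(\sigma^4)$, so $D$ is Gaussian up to corrections of relative order $\sigma$.
2. Write $\tfrac1B\sum_b D_b=m_1+\sqrt{v_1/B}\,Z_B$ with $Z_B$ standardized. By the central limit theorem $Z_B\Rightarrow\mathcal N(0,1)$, and $\E Z_B^2=1$ gives uniform integrability of $\lvert Z_B\rvert$. Hence the expectation of the absolute value converges.

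Part 2 yields $\mathrm{Floor}(B)=\E\lvert\mathcal N(m_1,v_1/B)\rvert+o(\sqrt{v_1/B})$ as $B\to\infty$. Part 1 controls the small-$\sigma$ regime at fixed $B$, where the non-Gaussianity of each $D_b$ enters only at higher order in $\sigma$.

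\emph{Main obstacle.} The hard step is making ``leading-order'' precise. The CLT error must be small compared with the folded-normal value itself, and that value is of order $\max(\lvert m_1\rvert,\sqrt{v_1/B})$.

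- When $\lvert m_1\rvert\gg\sqrt{v_1/B}$, the floor is $\approx\lvert m_1\rvert$, and a law-of-large-numbers argument suffices.
- When $\lvert m_1\rvert\lesssim\sqrt{v_1/B}$, I would first try a Berry--Esseen bound for the function $x\mapsto\lvert x\rvert$, which is Lipschitz. This uses the Wasserstein-1 CLT rate $W_1\le C\rho/(v_1\sqrt B)$ with $\rho=\E\lvert D-m_1\rvert^3$. The resulting error is $O(\rho/(v_1 B))$, which is $o(\sqrt{v_1/B})$.

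I would state the nonlinear claim as this Wasserstein-rate corollary, so that it is uniform in $m_1$.
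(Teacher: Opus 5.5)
Your proposal is correct and follows essentially the paper's route. The affine case is exact Gaussian closure of $\tfrac1B\sum_b D_b$; for nonlinear $A$ you use a second-order expansion in $\sigma$ plus the central limit theorem in $B$, and both cases end in the folded-normal mean. What you add sharpens what the paper asserts in one line: you note that affinity forces $m_1=0$, so the exact case is the half-normal value $\sqrt{2v_1/(\pi B)}$, and your $1$-Lipschitz/Wasserstein Berry--Esseen step makes ``leading order'' precise and uniform in $m_1$ (your fixed-$B$, small-$\sigma$ claim tacitly needs $A'(g^\ast)\neq0$, as the paper's does).
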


\begin{consequence}[Symmetric versus curved regimes]\label{cons:floor}
If the response is symmetric about the operating point, then $D$ is odd in the mean-zero perturbation
$\delta$, so $m_1=\E D=0$ and $\mathrm{Floor}(B)=\sqrt{2v_1/(\pi B)}$ decays as $B^{-1/2}$. If the
response is curved, then $m_1\neq0$, the decay stalls at $\mathrm{Floor}(\infty)=\lvert m_1\rvert$,
and the $B^{-1/2}$ law breaks.
\end{consequence}

A private feed is the limit $B=N$: every agent draws its own misperception, so the same expression
with $B=N$ gives the mean-field error as a function of population size. It decreases only until the
sampling spread $\sqrt{v_1/N}$ falls to the Jensen bias.

\begin{proposition}[The knee $N^\ast$]\label{prop:knee}
Under a private feed the scalar surrogate error equals $\E\lvert\mathcal N(m_1,v_1/N)\rvert$. It
decreases as $N^{-1/2}$ up to the knee
\[
  N^\ast=\frac{v_1}{m_1^{2}}\;\approx\;\frac{4\,A'(g^\ast)^2}{A''(g^\ast)^2\,\sigma^2},
\]
and plateaus beyond it at the curvature floor $\lvert m_1\rvert\approx\tfrac12\lvert A''(g^\ast)\rvert\sigma^2$.
\end{proposition}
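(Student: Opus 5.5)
The plan is to reduce the knee statement to the community-floor result, Proposition~\ref{prop:floor}, with $B$ replaced by $N$. Under a private feed every agent is its own community, so the realised aggregate is $A(g^\ast)+\tfrac1N\sum_i D_i$ with $D_i$ i.i.d.\ copies of $D$. The first step is to justify that the scalar surrogate error is $\E\lvert\mathcal N(m_1,v_1/N)\rvert$. When $A$ is affine this is exact: $D=A'(g^\ast)\delta$ is Gaussian, so $m_1=0$ and $v_1=A'^2\sigma^2$. In the general case I would invoke the central limit theorem for the i.i.d.\ average. It gives $\tfrac1N\sum_i D_i\approx\mathcal N(m_1,v_1/N)$ to leading order, which is precisely the ``leading-order asymptotic'' clause of Proposition~\ref{prop:floor} with $B=N$.

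The second step analyses the folded-normal mean $F(N)=\E\lvert\mathcal N(m_1,s^2)\rvert$, with $s^2=v_1/N$. Its closed form is
\[
F = s\sqrt{2/\pi}\,e^{-m_1^2/(2s^2)} + m_1\bigl(1-2\Phi(-m_1/s)\bigr).
\]
Only the ratio $r=\lvert m_1\rvert/s=\sqrt{N m_1^2/v_1}$ controls the regime. For $r\ll1$ we have $F\approx s\sqrt{2/\pi}=\sqrt{2v_1/(\pi N)}$, which is the $N^{-1/2}$ law of Consequence~\ref{cons:floor}. For $r\gg1$ we have $F\to\lvert m_1\rvert$, the plateau. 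The crossover sits at $r=1$, that is, at $N^\ast=v_1/m_1^2$. This is the definition of the knee, and I would say explicitly that ``knee'' means the scale where the two asymptotes $\sqrt{2v_1/(\pi N)}$ and $\lvert m_1\rvert$ are comparable. Their exact intersection is $2N^\ast/\pi$, differing from $N^\ast$ only by an $O(1)$ constant.

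The third step is the small-$\sigma$ Taylor expansion that produces the explicit approximations. Write $D=A'\delta+\tfrac12A''\delta^2+\tfrac16A'''\delta^3+O(\delta^4)$ with $\delta\sim\mathcal N(0,\sigma^2)$. Then $m_1=\tfrac12A''\sigma^2+O(\sigma^4)$, which gives the curvature floor $\lvert m_1\rvert\approx\tfrac12\lvert A''\rvert\sigma^2$. Likewise $v_1=A'^2\sigma^2+O(\sigma^4)$. Substituting,
\[
N^\ast=\frac{v_1}{m_1^2}\approx\frac{A'^2\sigma^2}{\tfrac14A''^2\sigma^4}=\frac{4A'^2}{A''^2\sigma^2}.
\]

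The main obstacle is the rigour of the nonlinear case, where $D$ is not Gaussian. The Gaussian approximation to $\tfrac1N\sum D_i$ must be accurate at the scale $\lvert m_1\rvert$, including near $N\approx N^\ast$ where $m_1$ and $s$ are comparable. I would first try a Berry--Esseen bound applied to $\E\lvert\cdot\rvert$. Its error is $O(\E\lvert D-m_1\rvert^3/(v_1 N))$, which is of order $s\cdot\sigma/\sqrt N$ and hence negligible relative to both $s$ and $\lvert m_1\rvert$ for small $\sigma$. I would also need $A'(g^\ast)\neq0$ and $A''(g^\ast)\neq0$, and finite moments of $D$, which hold if $A$ is smooth with polynomially bounded derivatives. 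If that bound proves messy, I would fall back on stating the result, as the proposition does, as a leading-order asymptotic in $\sigma\to0$.
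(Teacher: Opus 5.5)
Your proposal follows the paper's proof essentially step for step. You set $B=N$ in the folded-normal floor of Proposition~\ref{prop:floor}, place the crossover of the regimes $\sqrt{2v_1/(\pi N)}$ and $\lvert m_1\rvert$ at $v_1/N=m_1^2$, and substitute $m_1=\tfrac12A''\sigma^2+O(\sigma^4)$, $v_1=A'^2\sigma^2+O(\sigma^4)$. Your remark that the two asymptotes actually meet at $2N^\ast/\pi$ is a correct sharpening of what the knee means.

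There is one slip in your optional Berry--Esseen aside. Since $\E\lvert D-m_1\rvert^3\approx\E\lvert Z\rvert^3\,\lvert A'\rvert^3\sigma^3$ with $Z\sim\mathcal N(0,1)$, the bound $\E\lvert D-m_1\rvert^3/(v_1N)$ is of order $s/\sqrt N$, not $s\sigma/\sqrt N$. Berry--Esseen depends only on the standardized third absolute moment, so it cannot see that $D$ is nearly Gaussian. The bound is therefore negligible at and beyond the knee, where its size relative to $\lvert m_1\rvert$ is about $\sqrt{N^\ast}/N\le 1/\sqrt{N^\ast}=O(\sigma)$. Below the knee, relative to $s$, it is small only as $N\to\infty$, not as $\sigma\to0$.

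If you want a relative error that is $O(\sigma)$ uniformly in $N$, use the near-Gaussianity of $D$ directly. Couple the standardized summands $X_i=(D_i-m_1)/\sqrt{v_1}$ to $Z_i=\delta_i/\sigma$ times the sign of $A'(g^\ast)$. The differences are i.i.d.\ and mean zero, so $\E\bigl(N^{-1/2}\sum_i(X_i-Z_i)\bigr)^2=\E(X_1-Z_1)^2=O(\sigma^2)$ for every $N$. The map $x\mapsto\lvert m_1+sx\rvert$ is $s$-Lipschitz, so the Gaussian replacement costs an absolute $O(s\sigma)$. The folded-normal mean is at least $\max(\lvert m_1\rvert,\,s\sqrt{2/\pi})$, so this is a relative $O(\sigma)$ in every regime.
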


\begin{consequence}[Finite versus infinite knee]\label{cons:knee}
A curved response therefore has a finite knee, computable from the fitted response before any
simulation is run; a near-linear response has $N^\ast\to\infty$ and improves as $N^{-1/2}$
indefinitely.
\end{consequence}

The remaining term (interaction order) and the memory term are exact in the opposite sense: they
are \emph{zero} at the mean-field cell. Take a conditionally linear society
$x_{i,t+1}=a\,x_{i,t}+f(g_t)+\xi_{i,t}$ with $\xi_{i,t}$ independent, mean zero, variance $\sigma^2$,
where $g_t=\frac1N\sum_j x_{j,t}$ is the global mean perceived identically by every agent, and let
$P$ project onto the population mean.

\begin{proposition}[Exact commutation at the mean-field cell]\label{prop:commute}
For \emph{any} response $f$, $\E[g_{t+1}\mid g_t]=a\,g_t+f(g_t)=:\hat\Phi(g_t)$. The commutation
defect $\lVert P\Phi_t-\hat\Phi_t P\rVert$ therefore vanishes as $N\to\infty$, and the finite-$N$
defect equals the $O(N^{-1/2})$ fluctuation of the mean noise $\frac1N\sum_i\xi_{i,t}$.
\end{proposition}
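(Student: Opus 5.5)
The plan is to average the microscopic update directly, using the fact that every agent perceives the identical scalar $g_t$. Averaging $x_{i,t+1}=a\,x_{i,t}+f(g_t)+\xi_{i,t}$ over $i=1,\dots,N$ gives the exact identity
\[
  g_{t+1}=a\,g_t+f(g_t)+\bar\xi_t,\qquad \bar\xi_t:=\tfrac1N\textstyle\sum_i\xi_{i,t}.
\]
No approximation enters here. The linear term $a\,x_{i,t}$ averages to $a\,g_t$, and $f(g_t)$ is common to all agents, so it passes through the average unchanged whatever its form. This is the step where ``any response $f$'' is earned. Because $f$ is evaluated at a single shared argument rather than at agent-specific inputs, there is no $\frac1N\sum_i f(\cdot_i)$ versus $f(\frac1N\sum_i\cdot_i)$ gap. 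That gap is exactly the Jensen term of Propositions~\ref{prop:floor} and~\ref{prop:knee}, and here it is identically zero.

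Next I would take the conditional expectation given $g_t$. I would condition on the full time-$t$ state $\mathcal F_t$ (all $x_{j,t}$), under the standing assumption that the $\xi_{i,t}$ are independent of $\mathcal F_t$ and mean zero. This gives $\E[\bar\xi_t\mid\mathcal F_t]=0$. The tower property then yields $\E[g_{t+1}\mid g_t]=a\,g_t+f(g_t)=\hat\Phi(g_t)$, since $g_t$ is $\mathcal F_t$-measurable. The point to state carefully is that the closure is exact in the conditional mean at every finite $N$. The projected dynamics is therefore Markov in $g_t$, and no memory kernel is needed. This is the sense in which the memory term vanishes: the only memory is the linear self-term $a$, and it is already contained in $\hat\Phi$.

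Finally, I would identify the defect itself. From the identity, $P\Phi_t(x)-\hat\Phi_t(Px)=\bar\xi_t$ pathwise. Its variance is $\sigma^2/N$ by independence, so in $L^2$ the defect has norm $\sigma/\sqrt N=O(N^{-1/2})$, and it tends to zero almost surely and in $L^2$ as $N\to\infty$ by the law of large numbers.

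I expect the main obstacle to be interpretive rather than technical. The norm $\lVert P\Phi_t-\hat\Phi_t P\rVert$ is not defined in the paper, and it must be read as an $L^2(\Omega)$ norm of the random one-step discrepancy. The conditioning also has to be set up so that $\xi_{\cdot,t}$ is independent of the past. I would state both assumptions explicitly before running the three lines above.
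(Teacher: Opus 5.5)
Your proposal is correct and follows the paper's own proof essentially line for line. You average the update over agents, pull the shared $f(g_t)$ out of the sum so that no Jensen gap arises, and use that $\frac1N\sum_i\xi_{i,t}$ has mean zero and variance $\sigma^2/N$ to get both the exact identity $\E[g_{t+1}\mid g_t]=a\,g_t+f(g_t)$ and an $O(N^{-1/2})$ defect; your explicit assumptions (noise independent of the past, defect norm read in $L^2(\Omega)$) make precise what the paper leaves implicit.
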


\begin{consequence}[Mean-field exactness and curvature irrelevance]\label{cons:commute}
In conditional expectation the mean-field surrogate is therefore exact, and the curvature of $f$
contributes nothing: every agent perceives the identical $g_t$ (contrast the heterogeneous
perception of Proposition~\ref{prop:floor}).
\end{consequence}

Propositions~\ref{prop:floor}--\ref{prop:commute} are proved in Appendices~\ref{app:floor}--\ref{app:commute};
Proposition~\ref{prop:commute} makes the global-feed defect
$C/\sqrt N$ identically for an affine and for a curved response, while adding a heterogeneous
misperception reinstates the $O(1)$ Jensen floor of Proposition~\ref{prop:floor}; both are borne out
on the real LLM in Section~\ref{sec:perception}. The first two propositions are confirmed in
Section~\ref{sec:perception}: the near-linear consumption head has $N^\ast\approx2.5\times10^4$ and
keeps improving, while the saturating work head has $N^\ast\approx29$ and its private-feed error is
already flat at the predicted floor $0.018$ across $N=100$ to $3200$. Together the three propositions
make the schematic bound exact at the mean-field cell and along its first step off each axis: the
defect is $O(N^{-1/2})$ under a global feed, acquires the curvature floor of
Propositions~\ref{prop:floor}--\ref{prop:knee} under heterogeneous perception, and would acquire the
memory and interaction-order terms under long memory and a local feed respectively.

\subsection{From one-step defect to observable error}\label{sec:propagation}

Propositions~\ref{prop:floor}--\ref{prop:commute} bound the \emph{one-step closure defect}
$\varepsilon_t:=\lVert P\Phi_t-\hat\Phi_t P\rVert$ (the commutation defect of
Section~\ref{sec:framework}), whereas Section~\ref{sec:results} measures the error
of a macroscopic observable read off a whole surrogate trajectory. These are three distinct quantities,
and a standard propagation bound fixes when the first controls the last. Write $\hat z_t$ for the
surrogate trajectory ($\hat z_{t+1}=\hat\Phi_t(\hat z_t)$, $\hat z_0=Px_0$), $e_t:=\lVert Px_t-\hat z_t\rVert$
for the \emph{trajectory error}, and $O$ for an $L_O$-Lipschitz observable, so that the
\emph{observable error} equals $\lvert O(Px_T)-O(\hat z_T)\rvert$ and is at most $L_O\,e_T$.

\begin{lemma}[Error propagation]\label{lem:propagate}
Suppose the surrogate map $\hat\Phi_t$ is $L$-Lipschitz for every $t$ and the one-step defect satisfies
$\varepsilon_t\le\varepsilon$ for every $t<T$. Then the trajectory error obeys
\[
  e_T\ \le\ \sum_{t=0}^{T-1}L^{\,T-1-t}\,\varepsilon_t\ \le\ \varepsilon\,\frac{L^{T}-1}{L-1},
\]
and the observable error is at most $L_O\,e_T$. In particular, a contractive surrogate ($L<1$) has
$e_T\le\varepsilon/(1-L)$ uniformly in the horizon. A neutral one ($L=1$) has $e_T\le T\varepsilon$. An
expanding one ($L>1$) may amplify the defect geometrically, so the one-step floor controls the observable
only over horizons $T\lesssim 1/\log L$.
\end{lemma}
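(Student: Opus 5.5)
The plan is a one-step recursion for $e_t$ followed by unrolling. Interpret the one-step defect pointwise along the true trajectory: $\varepsilon_t=\lVert Px_{t+1}-\hat\Phi_t(Px_t)\rVert$, which is the commutation defect $P\Phi_t-\hat\Phi_tP$ evaluated at $x_t$. (If the defect is defined as a sup-norm over states, this pointwise quantity is dominated by it, so the bound only gets weaker.) Adding and subtracting $\hat\Phi_t(Px_t)$ and using the triangle inequality gives
\[
e_{t+1}=\lVert Px_{t+1}-\hat\Phi_t(\hat z_t)\rVert\le \lVert Px_{t+1}-\hat\Phi_t(Px_t)\rVert+\lVert \hat\Phi_t(Px_t)-\hat\Phi_t(\hat z_t)\rVert\le \varepsilon_t+L\,e_t .
\]
The last step uses the $L$-Lipschitz property of $\hat\Phi_t$.

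Next I unroll this discrete Gr\"onwall inequality by induction on $T$. The base case is $e_0=\lVert Px_0-\hat z_0\rVert=0$ by the choice $\hat z_0=Px_0$. The inductive step gives $e_T\le\sum_{t=0}^{T-1}L^{T-1-t}\varepsilon_t$. Substituting $\varepsilon_t\le\varepsilon$ and summing the geometric series gives $\varepsilon(L^T-1)/(L-1)$ for $L\neq1$, and $T\varepsilon$ in the limit $L=1$, which I treat as a separate case. The observable bound is immediate: $\lvert O(Px_T)-O(\hat z_T)\rvert\le L_O\lVert Px_T-\hat z_T\rVert=L_Oe_T$.

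For the three regimes:
\begin{itemize}
\item If $L<1$, then $(1-L^T)/(1-L)\le 1/(1-L)$ for every $T$, so the bound is uniform in the horizon.
\item If $L=1$, the bound is $T\varepsilon$, as above.
\item If $L>1$, the factor $(L^T-1)/(L-1)$ is at most of order $T$ while $L^T=e^{T\log L}$ is $O(1)$, that is, for $T\lesssim 1/\log L$. This is an order-of-magnitude statement: the one-step floor stays in control until the geometric factor takes off. I will phrase it with the explicit estimate $L^T\le e$ for $T\le1/\log L$, which gives $e_T\le (e-1)\varepsilon/(L-1)$, and note that $(e-1)/(L-1)\approx (e-1)/\log L$ is comparable to $T$ at the boundary.
\end{itemize}

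The computations are routine. The one real obstacle is the stochastic setting: in Propositions~\ref{prop:floor}--\ref{prop:commute} the defect is a random quantity (or a conditional-expectation statement), not a deterministic norm. I will state that the argument is pathwise, so it holds realisation by realisation whenever $\varepsilon_t\le\varepsilon$ holds on that path. Taking expectations of the pathwise inequality $e_T\le\sum_t L^{T-1-t}\varepsilon_t$ gives the same bound with $\E\varepsilon_t$ in place of $\varepsilon_t$, by linearity. This expectation form is the one used when feeding in the $O(N^{-1/2})$ or Jensen-floor defects.
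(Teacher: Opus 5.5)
Your proposal is correct and is the paper's own argument: the paper proves the lemma in one line, as the discrete Gr\"onwall recursion $e_{t+1}\le\varepsilon_t+L\,e_t$ obtained from the triangle inequality and the $L$-Lipschitz continuity of $\hat\Phi_t$, unrolled from $e_0=0$, which is what you do. Your additions (reading $\varepsilon_t$ pointwise along the true trajectory, the expectation form $\E e_T\le\sum_t L^{T-1-t}\E\varepsilon_t$, and the explicit horizon estimate) are sound and go beyond what the paper writes; to make the horizon claim hold for every $L>1$ rather than only for $L$ near $1$, use convexity of $x\mapsto e^x$ on $[0,1]$ together with $\log L\le L-1$, which give $(L^T-1)/(L-1)\le(e-1)T$ whenever $T\le 1/\log L$.
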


The bound is the discrete Gr\"onwall recursion $e_{t+1}\le\varepsilon_t+L\,e_t$ (the triangle inequality
plus $L$-Lipschitz continuity of $\hat\Phi_t$), unrolled from $e_0=0$.

\begin{consequence}[When the one-step floor is what the observable sees]\label{cons:propagate}
Reading a macroscopic observable's accuracy off the one-step floor and knee of
Propositions~\ref{prop:floor}--\ref{prop:knee} is legitimate whenever the coarse dynamics are
non-expanding: there the observable errors of Section~\ref{sec:results} inherit the same $N$-trend as the
one-step defect. They can depart from it only where the surrogate map is locally expanding, which is why
the near-critical cells of Figure~\ref{fig:taxonomy} are the ones on which the trend must be read over a
longer run rather than a single step.
\end{consequence}

\section{Methods}\label{sec:methods}

The method is one procedure, applied unchanged to every target in Section~\ref{sec:results}
(Algorithm~\ref{alg:recipe}): classify the simulation's perception cell, which \emph{predicts} the
$N$-trend of the surrogate error before any fitting; screen the observable; elicit a few hundred to a
few thousand genuine LLM decisions on the target's own prompts; clone a low-parameter surrogate;
read off its error floor and knee from the fitted response; and run the surrogate society to large
$N$ on a laptop, validating the macroscopic observable and checking the error trend against the
cell's prediction. The remainder of this section states the problem formally and details each
component; the parenthetical step numbers refer to Algorithm~\ref{alg:recipe}.

\begin{algorithm}[t]
\caption{The poor-man's recipe: replace each LLM agent by a low-parameter surrogate, then scale the society.}
\label{alg:recipe}
\begin{algorithmic}[1]
\Require a target LLM simulation $S$ with its perception and memory specification; a macroscopic observable $M$; the published value $M^\star$ at conditions $C$
\Ensure a low-parameter surrogate reproducing $M$ at any size $N$ where the cell permits, and a prediction (made from the cell, before any fitting) of how the surrogate error scales with $N$
\State \textbf{Classify} the perception cell: $(\text{order},\text{memory})\gets\Call{classify\_cell}{S}$ \Comment{predicts the error trend before any fitting}
\State \textbf{Screen} $M$: if a behaviour-free policy already reproduces it, reject $M$ \Comment{it is then an accounting identity}
\State \textbf{Elicit} decisions: $\mathcal{D}\gets$ query the LLM on $S$'s own prompts over a small state grid \Comment{cached; a few dollars}
\State \textbf{Clone} the surrogate: $\theta\gets\Call{clone}{\mathcal{D}}$ \Comment{fit the $2$--$12$ parameters by behavioural cloning}
\State \textbf{Read} the floors: $(\text{floor},N^\ast)\gets\Call{perception\_floors}{\theta}$ \Comment{error floor and knee $N^\ast$ from the fitted response}
\State \textbf{Sweep} $N$ geometrically with $N\gg N_{\text{target}}$: simulate the surrogate society and measure $M(N)$
\State \textbf{Validate}: compare $M(N)$ to $M^\star$ at $C$ and test the error trend against the cell's prediction \Comment{pre-registered}
\end{algorithmic}
\end{algorithm}

\textbf{Problem statement.} We are given a target LLM society with microscopic update $\Phi_t$, a
macroscopic observable fixed by the projection $P$ of Section~\ref{sec:framework}, its published
value $M^\star$ at conditions $C$, and a target size $N$. We seek a low-parameter surrogate map
$\hat\Phi_{t}$ with parameter vector $\theta\in\R^d$, $d$ small, whose per-agent policy
$\hat\pi_\theta$ is fitted from real LLM decisions, such that coarse-graining commutes with the
surrogate dynamics: the objective is to minimise the finite-size surrogate error
$\varepsilon(N)=\lVert P\Phi_t-\hat\Phi_t P\rVert$ read out through the observable. Crucially, the
parameters are fitted by behavioural cloning of the per-agent policy,
$\theta=\arg\min_\theta\sum_{(s,a)}\ell\bigl(a,\hat\pi_\theta(s)\bigr)$ over a transfer set of
teacher decisions $(s,a)$, \emph{not} by matching $M^\star$; the macroscopic observable is therefore
never fitted and is always an out-of-sample test, and the taxonomy of
Section~\ref{sec:framework} predicts the trend of $\varepsilon(N)$ from the design's cell alone.

The primary target that instantiates this problem is EconAgent~\cite{econagent} (step~6's
environment), a macroeconomy in which each household is a GPT agent that decides monthly whether to
work and how much to consume; the released code reproduces business-cycle signatures including a
Phillips and an Okun relation. We reimplement its market mechanics in NumPy so that a surrogate can
be dropped into an identical environment, and the reimplementation is code-authoritative: it chains
the four published components (labour, progressive taxation, consumption, and savings) and
reproduces the sign and magnitude of the released non-LLM baseline.

Before an observable is used as a validation target it must pass an a-priori screen (step~2),
because not every macroscopic relation measures behaviour: we ask whether a behaviour-free policy
already produces it, and if so the observable is an accounting identity that validates nothing. This
discriminator is fixed in advance and applied uniformly.

The behavioural content that survives is
captured by cloning (steps~3--4). A surrogate agent is a twelve-parameter student: two logistic
heads, one for the work decision and one for the consumption propensity, each linear in six features
(a bias, three standardised state variables, the interest rate, and the macroscopic signal
$g$), fitted from a transfer set of (state, action) decisions produced by a teacher. The teacher may
be a hand-specified policy or a real LLM, and a single loader makes the two kinds of trace
interchangeable, so the same pipeline serves the controls and the LLM experiments. For the LLM
experiments the teacher is DeepSeek (\texttt{deepseek-chat}, queried through its
Anthropic-compatible endpoint), prompted on EconAgent's own monthly household prompts; decisions are
cached to disk, so a fitted result re-runs at no cost, every paid run carries a hard budget guard,
and the total DeepSeek spend across the whole study is a few dollars.

The classification and finite-size steps (1 and~5) are carried by a small reusable module:
\texttt{classify\_cell} maps a perception and memory spec to a cell, and \texttt{perception\_floors}
returns the mean-field error floors of a fitted response. We validate these primitives against
systems whose scaling is known exactly (the Minority Game and network epidemic models) before
trusting them on LLM targets.

\section{Results}\label{sec:results}

The experiments answer three questions. \textbf{RQ1 (prediction):} does a simulation's perception
cell predict the sign of the surrogate error's $N$-trend (vanishing, plateauing, or
growing) \emph{before} any agent is cloned? \textbf{RQ2 (reproduction):} can a two- to
twelve-parameter surrogate, fitted from a few dollars of genuine LLM decisions, reproduce the
published macroscopic behaviour of named LLM simulations across all three cells: the number within
noise where matching it does not require the target's own model, and the mechanism and functional
form where it does? \textbf{RQ3 (mechanism):} for a given macroscopic observable, is it behavioural at all, and
which single microscopic ingredient carries it? RQ1 is tested by the blind, pre-registered graph and
perception experiments (Sections~\ref{sec:blind}--\ref{sec:perception}) and by the finite-size trend
in every cell; RQ2 by EconAgent (Section~\ref{sec:econagent}) and the remaining named targets
(Appendix~\ref{app:suite}); RQ3 by the a-priori observable test and the
identifiability and De Marzo analyses (Sections~\ref{sec:econagent}, \ref{sec:ident},
\ref{sec:demarzo}).

\subsection{EconAgent in the mean-field cell}\label{sec:econagent}

EconAgent's published targets are a Phillips correlation of $-0.619$ and an Okun correlation of
$-0.918$. The a-priori test separates them. In EconAgent real GDP is, by construction, an
affine-invertible function of the number of working agents (an $R^2=0.996$ fit), so Okun's law
relates a quantity to an affine image of itself; a behaviour-free $\mathrm{Bernoulli}(0.5)$ work
policy already yields an Okun correlation of $-0.998$. Reproducing Okun therefore validates nothing.

The Phillips curve is different: wage inflation is driven by goods-market imbalance with no direct
employment-to-wage channel, so a negative unemployment--inflation relation is a genuine behavioural
signature. Its mechanism is a single procyclical-labour coupling: work propensity rising with the
price signal. Fitting the twelve-parameter student to a procyclical teacher recovers that coupling,
and the macroscopic Phillips correlation, which never enters the fit, emerges as an out-of-sample
prediction at $-0.569 \pm 0.138$, within noise of both the teacher and the published target
($\lvert\Delta\rvert = 0.05$). A non-procyclical teacher cloned through the identical pipeline gives
a vanishing Phillips; the coupling, not the pipeline, produces the effect. This is
the mean-field cell in action: as $N$ grows from $20$ to $500$ the cloned Phillips strengthens
monotonically from $-0.29$ to $-0.86$ as averaging over more agents reinforces the weak signal.

\subsection{Identifiability is a frontier, not a degeneracy}\label{sec:ident}

One macroscopic number need not pin the microscopic mechanism. We map the reachable Phillips
frontier of three candidate micro-channels (Table~\ref{tab:frontier}). Only labour keyed to the
price signal reaches the published value, and near that frontier the channel is nearly pinned,
though the pinning requires the coupling to sit at the upper end of its estimated range. Degeneracy
is local rather than global: it occurs only at an intermediate value of the Phillips correlation
where a labour and a consumption channel coexist, and there a single additional aggregate
observable (the sign of consumption propensity against unemployment, or of employment against the
price level) separates them. So one macroscopic number is more identifying than a naive
under-determination concern would suggest, and where it is not, one more number suffices.

\begin{table}[t]
\centering
\begin{tabular}{lcc}
\hline
Micro channel & most-negative Phillips reachable & reaches $-0.62$?\\
\hline
price-keyed labour ($\mathrm{proc}\cdot g$) & $-0.80$ & yes\\
countercyclical consumption & $\approx -0.43$ & no (saturates)\\
employment-feedback labour & $+0.40$ (wrong sign) & no\\
\hline
\end{tabular}
\caption{\textbf{The reachable Phillips frontier of three micro-channels.} Only price-keyed labour
reaches the published $-0.62$; near the frontier the mechanism is nearly identified by the single
macroscopic number.}
\label{tab:frontier}
\end{table}

\subsection{The triple join on genuine LLM decisions}\label{sec:triple}

We now drive the pipeline with real LLM decisions: fit the student from a budget of DeepSeek
decisions elicited on EconAgent's household prompts, drop it into the market, and recover the
macroscopic observable. The runs cost \$0.44 and \$0.67 for three and six thousand decisions.

The headline is that \emph{asking the model to reason} is what produces the Phillips curve, and a
$2\times2$ ablation isolates it. EconAgent's protocol gives GPT a reasoning channel through a
quarterly reflection; the ablation below is our own elicitation probe. A naive ``reasoning'' prompt
confounds three changes (a chain-of-thought step, an amplified inflation wording, and an intensity
adjective), so we cross whether the model reasons with whether the inflation signal is amplified,
holding the numeric input to the student byte-identical across all four cells (Table~\ref{tab:2x2}).

The two factors are orthogonal: \emph{reasoning} turns the chain-of-thought step on or off, while
\emph{wording} presents the same inflation figure in plain or in amplified language. Reading the four
cells, with no reasoning the cloned Phillips is weak and even flips sign with wording ($-0.43$ plain,
$+0.04$ amplified), whereas with reasoning it is strongly negative under both wordings ($-0.73$ and
$-0.66$): turning reasoning on (moving down a column) shifts the correlation sharply, and once
reasoning is on the wording barely moves it ($-0.73$ vs $-0.66$); wording matters only when reasoning
is off, where it flips the sign. The reasoning effect is large and clearly resolved even where it is
smallest: at plain wording, turning reasoning on shifts the correlation by $0.30$ at a difference
standard error near $0.04$ (the main effect across both wordings is $0.50$), whereas the two
reasoning cells are only marginally separated, so we do not claim one is robustly more negative than
the other.

\begin{table}[t]
\centering
\begin{tabular}{lcc}
\hline
clone Phillips at $N=100$ (chat, 24 seeds) & plain wording & amplified wording\\
\hline
no reasoning & $-0.43 \pm 0.18$ & $+0.04 \pm 0.21$\\
reasoning (CoT) & $-0.73 \pm 0.10$ & $-0.66 \pm 0.12$\\
\hline
\end{tabular}
\caption{\textbf{Reasoning, not wording, drives the emergent Phillips curve.} A $2\times2$ ablation;
the numeric signal fed to the student is identical across the four cells.}
\label{tab:2x2}
\end{table}

The reproduction is model- and reasoning-conditional rather than robust. Under a reasoning prompt
DeepSeek-chat's cloned Phillips is $-0.665 \pm 0.12$, the nearest cell landing $\lvert\Delta\rvert
\approx 0.05$ from the published $-0.619$; but sibling models under the same prompt scatter to
$-0.78$ and $-0.84$, and no configuration reproduces the published value robustly. The exact value
is a model-and-prompt fingerprint. The cheap surrogate is thus both a cost-saving device and an
instrument: what makes the macroscopic law shift (here, whether the agent reasons) is itself a
measurement. The clone also independently reproduces the mean-field scaling of
Section~\ref{sec:econagent}, its terse-prompt Phillips strengthening from $-0.34$ at $N=20$ to
$-0.61$ at $N=500$, so all three legs of the join close on genuine LLM data.

\subsection{Closure-machinery checks on known ground truth}\label{sec:controls}

Before trusting the taxonomy on LLM targets we certify its closure machinery on systems whose
scaling is known exactly. On epidemic dynamics over contact graphs, a scalar mean field is accurate
on a complete graph, degrades on a $k$-regular graph, and fails near threshold on a scale-free graph~\cite{psv}
and on a real Facebook network, exactly as the interaction-order axis predicts. On the Minority Game~\cite{minority}, a finite-size-scaling data collapse recovers the critical control parameter to within
$13\%$; we therefore treat roughly $15\%$ as the toolkit's resolution floor and do not read
precision below it. These controls are epidemics and games, not LLM agents; their role is to
validate the machinery, not the LLM claims.

\subsection{A blind test of the taxonomy on held-out graphs}\label{sec:blind}

To make this a test rather than a fit, we pre-registered each cell's assignment and its predicted
$N$-trend, then evaluated them on contact graphs held out from all calibration.

All three predictions held.

The mean-field cell's error shrank with $N$, the community cell's settled to an $O(1)$ floor,
and the local cell's tracked the degree structure; at large $N$ the three errors ordered themselves
exactly as the assignment demanded. This validates the interaction-order axis on graph dynamics. The
same axis, on a real LLM's response function, is tested next (Section~\ref{sec:perception}).

\subsection{A blind test on the LLM perception layer}\label{sec:perception}

The sharpest test of the perception axis is on a real LLM's response function rather than on a
synthetic one. We fit the student to a genuine DeepSeek reasoning trace and then feed a population
three signals with the same mean and, for the private and community cases, the same misperception
variance, differing only in correlation structure: a global feed that every agent shares, a private
feed drawn independently per agent, and a community feed shared within each of a fixed number of
blocks. The prediction, pre-registered, is that private noise averages away while community noise
leaves a floor that falls only with the number of blocks.

On the near-linear consumption head all five pre-registered predictions held: the global and private
errors fall as $N^{-1/2}$, the community error is flat in $N$ at an $O(1)$ floor, that floor falls
as $B^{-1/2}$ in the number of communities, and a block-aware surrogate repairs it. The strongly
saturating work head refutes two of its five predictions, and we report the refutation as measured:
its private-feed error does not shrink but sits at a floor of $0.018$.

A post-hoc analysis, not
pre-registered, identifies the cause as response curvature: the infinite-population Jensen bias
$\lvert\E_\varepsilon f(g^\ast+\varepsilon) - f(g^\ast)\rvert$ equals $0.018$, matching the floor.
The same curvature constant then predicts, with no further fitting, why the community floor's
$B^{-1/2}$ decay is broken on this head: the block-averaged floor cannot fall below the Jensen
level, so its ratio across $B=5$ to $80$ is compressed to about $2.0$ against the ideal $4.0$ and
the measured $1.93$, while the odd stance response of Appendix~\ref{sec:targets} has zero Jensen bias by
symmetry and obeys the clean law.

The curvature term of the commutation heuristic thus turns from a
post-hoc diagnosis into a confirmed quantitative prediction (Figure~\ref{fig:perception}). It also
confirms Proposition~\ref{prop:knee}: the work head's fitted response gives a knee
$N^\ast=v_1/m_1^2\approx29$, so its private-feed error should already be flat at the floor by
$N=100$, as observed across $N=100$ to $3200$; the consumption head gives $N^\ast\approx2.5\times
10^4$, so it keeps improving throughout, as observed.

\begin{figure}[t]
  \centering
  \includegraphics[width=0.98\linewidth]{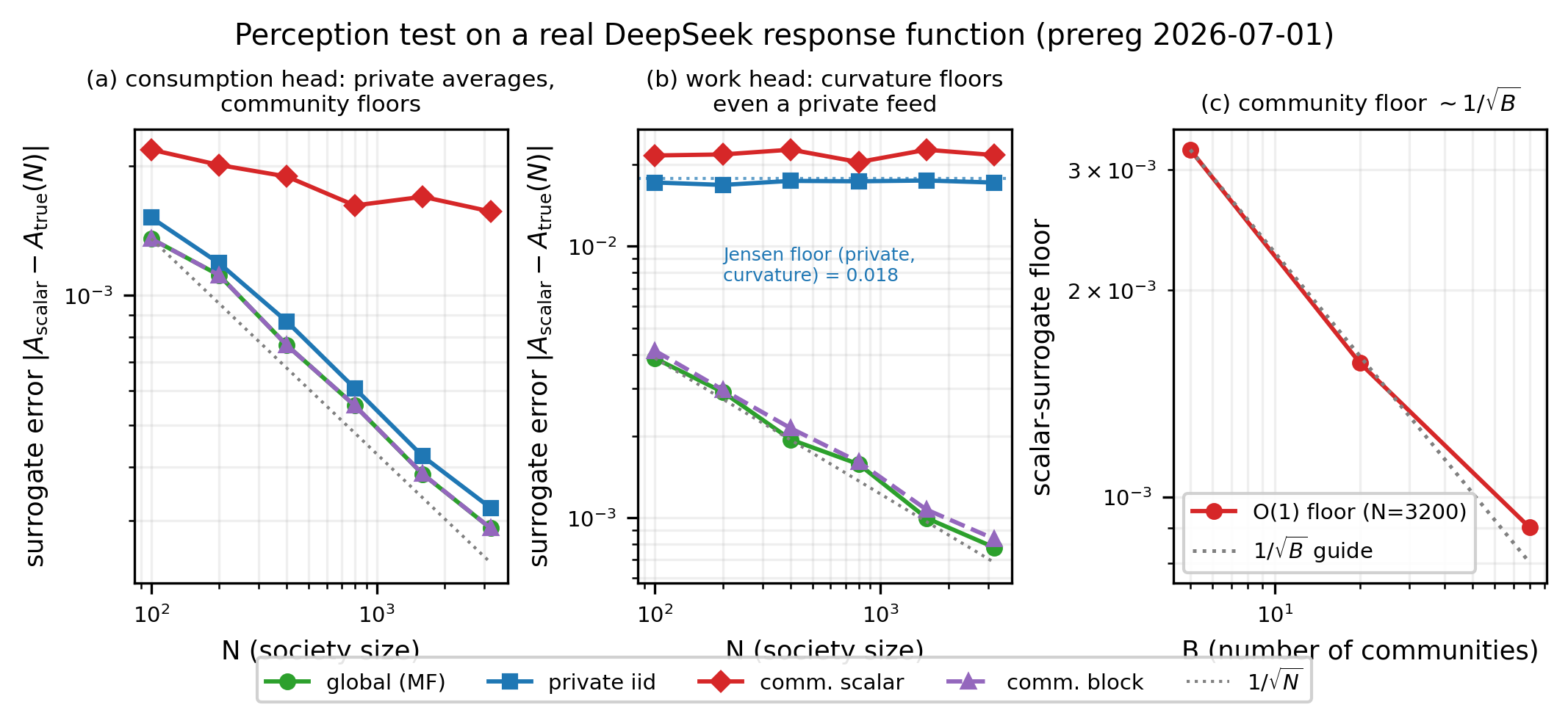}
  \caption{\textbf{The perception switch on a real DeepSeek response function.} Private
  misperception averages away; correlated community misperception leaves an $O(1)$ floor that falls
  as $B^{-1/2}$; a strongly curved response leaves a Jensen floor even under a private feed.}
  \label{fig:perception}
\end{figure}

Both mechanisms live on a single axis. Holding the total misperception variance fixed and letting a
knob $\lambda$ set the fraction that is community-shared rather than private sweeps a real DeepSeek
society across the mean-field boundary (Figure~\ref{fig:dial}): the near-linear head sweeps cleanly
from a vanishing error at $\lambda=0$ to an $O(1)$ floor at $\lambda=1$, the saturating head floors
at the Jensen level for all $\lambda$, and the block-aware surrogate leaves a residual equal to the
Jensen floor scaled by the private fraction. Because the variance is fixed, this isolates
correlation as the driver.

\begin{figure}[t]
  \centering
  \includegraphics[width=0.98\linewidth]{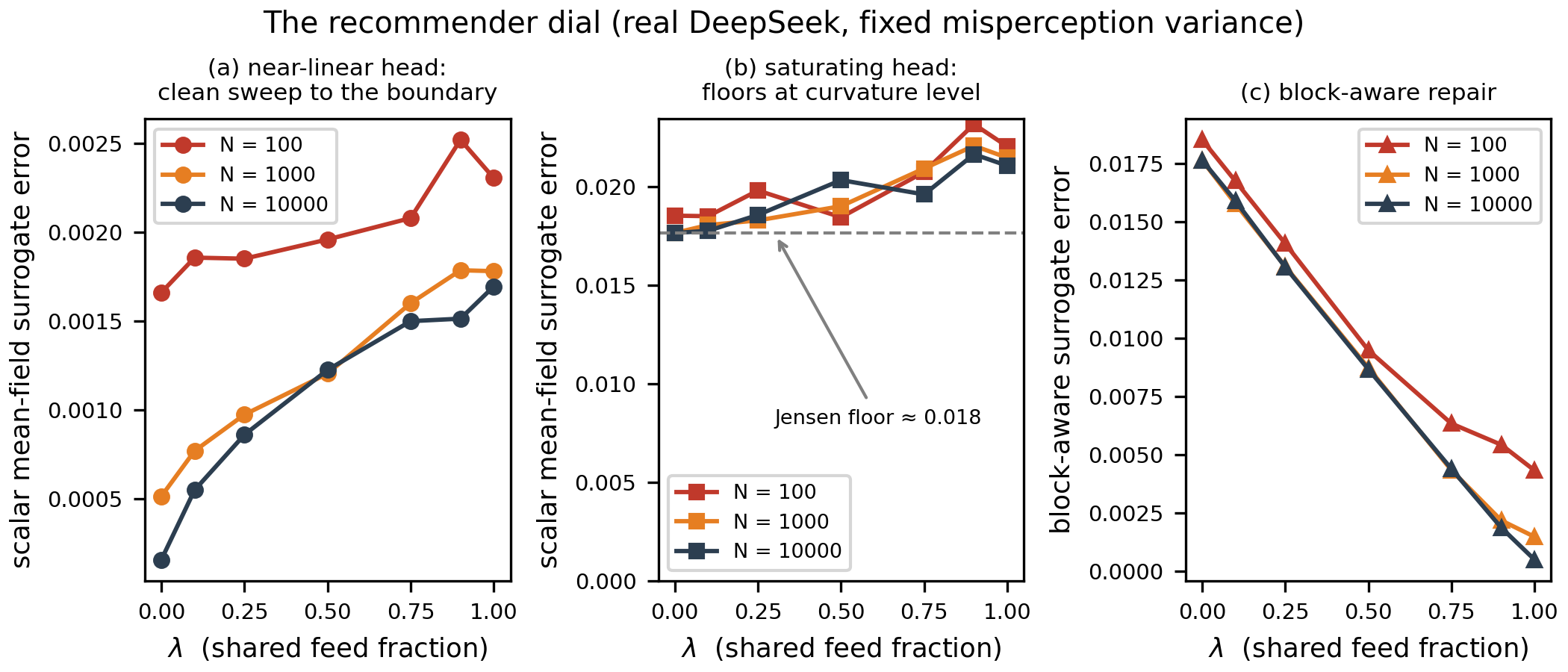}
  \caption{\textbf{The recommender dial.} One knob $\lambda$ (the community-shared fraction of the
  feed, at fixed misperception variance) moves the society across the predictability boundary; the
  two mechanisms (correlation break and curvature floor) separate on the one axis.}
  \label{fig:dial}
\end{figure}

To separate what the taxonomy fixes from what the individual model fixes, we refit $f_{\rm LLM}$ on
thirteen elicitation traces spanning five independent labs (DeepSeek, OpenAI, Anthropic, Google,
and Meta) and both plain and reasoning prompts, and re-ran the whole perception test on each
(\texttt{run\_xmodel\_perception.py}, pre-registered before the traces were collected). The three
predictions that define coarse-graining---the global feed averages out, the correlated community
feed leaves an $O(1)$ floor, and a block-aware closure repairs it---held for twelve of the thirteen.
The single exception is \texttt{gpt-4o-mini}, the smallest model in the panel: its block-aware
closure moves in the predicted direction, with the block error falling monotonically relative to the
scalar error, but does not reach the factor-of-two criterion by $N=3200$. Every other model,
including the three OpenAI flagships \texttt{gpt-4o}, \texttt{gpt-4.1}, and \texttt{gpt-5.1}, clears
the core, so the failure tracks model capability rather than any one lab's style. The one prediction
that flips across models is that a \emph{private} feed also averages out, and it flips with each
model's response curvature: the Jensen bias runs from $\approx10^{-4}$ (private error falls about
fivefold with $N$) to $0.018$ (private error flat in $N$), and the knee $N^\ast$ moves from
$\sim\!10^{5}$ down to $\sim\!140$ in step, as Proposition~\ref{prop:knee}
predicts.\footnote{These cross-model knees are read empirically from the private-feed scan; for the
primary DeepSeek trace this gives $\sim\!140$, whereas the analytic $v_1/m_1^2$ of
Section~\ref{sec:perception} gives $\approx29$ for the same trace. The two agree to an order of
magnitude, the gap reflecting the folded-normal approximation in the empirical estimator.} The cell
assignment is therefore a property shared across architectures rather than a house style of one
model family; the response curvature that sets the knee is a property of the individual model, and
the two vary independently.

\subsection{De Marzo et al.: the critical group size is a perception limit}\label{sec:demarzo}

De Marzo, Castellano and Garc\'ia~\cite{demarzo} give a published universal result: an LLM shown its
peers' opinions adopts the majority with probability $P(m)=\tfrac12[\tanhh(\beta m)+1]$, governed by
one majority-force $\beta$, with \emph{consensus} (a macroscopic state in which the population
aligns on one opinion) requiring $\beta>1$ and a critical group size $N_c$ where
$\beta(N)=1$. They report $N_c$ growing with model capability: $N_c\approx50$ for Llama-3-70B, and
only lower bounds ($\gtrsim1000$) for their more capable models, which they could not push to
consensus failure within the tested range. Across models $N_c$ correlates strongly with the MMLU capability
benchmark ($r\approx0.75$). This target lets us do more than
reproduce a mechanism: we can ask \emph{why} a finite critical group size exists at all, and the
answer turns out to be the paper's own variable---perception.

We fit $\beta(N)$ by maximum likelihood from real decisions elicited with their verbatim prompt
across six models (Figure~\ref{fig:demarzo}, Table~\ref{tab:demarzo}). The measurement reproduces
their universal form, and it exposes the mechanism. A finite $N_c$ cannot come from the mean-field
response: the self-consistency $m=\tanhh(\beta m)$ has a nonzero (consensus) solution for every $N$
when $\beta>1$, and finite-$N$ fluctuations only round the transition. Formally (the proof is in Appendix~\ref{app:demarzo}):

\begin{proposition}[Consensus threshold and its crossing]\label{prop:demarzo}
The iterated response $m\mapsto\tanhh(\beta m)$ is odd with Jacobian $\beta$ at $m=0$. Its disordered
fixed point $m=0$ is therefore stable iff $\beta<1$ and loses stability at $\beta_c=1$ through a
supercritical pitchfork~\cite{strogatz} (oddness excludes a quadratic term), with ordered branch
$m^\ast\simeq\sqrt{3(\beta-1)}$ as $\beta\downarrow1$. Hence consensus exists iff $\beta>1$,
independently of $N$.
\end{proposition}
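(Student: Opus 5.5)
The proof is a direct one-dimensional stability analysis of the map $F_\beta(m)=\tanhh(\beta m)$ on $[-1,1]$, in four steps.

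\textbf{Oddness and the Jacobian.} Since $\tanhh$ is odd, $F_\beta(-m)=-F_\beta(m)$, so $m=0$ is always a fixed point. Differentiating gives $F_\beta'(m)=\beta\,(1-\tanhh^2(\beta m))$, hence $F_\beta'(0)=\beta$. For $\beta\ge0$ the linearisation at the origin has multiplier $\beta$, so $m=0$ is linearly stable when $\beta<1$ and unstable when $\beta>1$. To settle the marginal case $\beta=1$ and to make stability global rather than merely local, I will use the inequality $\lvert\tanhh(x)\rvert<\lvert x\rvert$ for $x\neq0$. It gives $\lvert F_\beta(m)\rvert<\beta\lvert m\rvert\le\lvert m\rvert$ whenever $\beta\le1$, so iterates decrease monotonically in modulus to $0$, and $0$ is the only fixed point.

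\textbf{Existence of consensus for $\beta>1$.} Consider $h(m)=\tanhh(\beta m)-m$ on $(0,1]$. Near $0$ we have $h(m)\approx(\beta-1)m>0$, while $h(1)=\tanhh(\beta)-1<0$, so the intermediate value theorem gives a root $m^\ast\in(0,1)$. By oddness, $-m^\ast$ is also a fixed point. The root is unique because $\tanhh(\beta m)$ is strictly concave on $(0,\infty)$. The ordered branch is stable because concavity forces the chord inequality $F_\beta'(m^\ast)<F_\beta(m^\ast)/m^\ast=1$, and $F_\beta'>0$ rules out a multiplier below $-1$.

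\textbf{Normal form and pitchfork.} Taylor-expand to get $F_\beta(m)=\beta m-\tfrac13\beta^3m^3+O(m^5)$. There is no $m^2$ term, since oddness kills every even power. Dividing the fixed-point equation by $m\neq0$ gives $\beta-1=\tfrac13\beta^3m^2+O(m^4)$, so $m^{\ast2}=3(\beta-1)/\beta^3+O((\beta-1)^2)$. This yields $m^\ast\simeq\sqrt{3(\beta-1)}$ as $\beta\downarrow1$. The negative cubic coefficient identifies the bifurcation as supercritical: two stable branches emerge as the origin destabilises, consistent with the stability shown in the previous step. The standard pitchfork classification~\cite{strogatz} for odd maps with $F'(0)$ crossing $1$ and a nonzero cubic term then applies.

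\textbf{Independence of $N$.} The map involves only $\beta$, so the criterion $\beta>1$ contains no $N$. Any $N$-dependence of consensus must therefore enter through $\beta(N)$, which is the point the section goes on to make.

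The main obstacle is the marginal case $\beta=1$, where linearisation is inconclusive. The strict inequality $\lvert\tanhh x\rvert<\lvert x\rvert$ handles it cleanly, and I would state it explicitly rather than appeal to the normal form. A second subtlety is to restrict to $\beta\ge0$, since the majority force is nonnegative; this avoids period-two behaviour at $F'(0)=-1$.
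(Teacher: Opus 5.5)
Your proof is correct, and its local core matches the paper's. The appendix proof consists only of your first and third steps: the Jacobian $\beta\operatorname{sech}^2(\beta m)$ at $m=0$, oddness excluding a quadratic term, and the cubic expansion giving $m^{\ast2}=3(\beta-1)/\beta^3$, plus the remark that the map contains no $N$.

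What you add is global, and it matters. The paper's argument is a local bifurcation analysis. Strictly, it yields the ordered branch only for $\beta$ slightly above $1$, and it never excludes nonzero fixed points for $\beta\le1$ away from the origin. Your additional ingredients close this:
\begin{itemize}
\item the intermediate-value argument gives existence of $m^\ast$ for every $\beta>1$;
\item concavity of $\tanhh(\beta m)$ on $(0,\infty)$ gives uniqueness of $m^\ast$, and stability through $0<F_\beta'(m^\ast)<F_\beta(m^\ast)/m^\ast=1$;
\item the strict inequality $\lvert\tanhh x\rvert<\lvert x\rvert$ rules out an ordered state and gives global attraction to $0$ for $0<\beta\le1$.
\end{itemize}
Together these make ``consensus exists iff $\beta>1$'' hold for all $\beta\ge0$, not just near the threshold.

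One phrasing point. That same inequality shows the origin is still asymptotically stable at $\beta=1$, merely non-hyperbolic. So the statement's ``stable iff $\beta<1$'' is true for linear stability, which is how you phrase it, but not for Lyapunov stability. What your treatment of the marginal case settles is the consensus half of the claim (no ordered state at $\beta=1$). Say so explicitly, rather than presenting it as confirming the stability dichotomy.
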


\begin{consequence}[Finite $N_c$ is a perception threshold]\label{cons:demarzo}
A finite critical group size therefore exists \emph{if and only if} the measured slope $\beta_{\rm
eff}(N)$ decays through $1$. The slope $\beta_{\rm eff}(N)$ is the resolution with which an agent reads
a weak majority in a list of $N$ opinions, so $N_c$ is a perception threshold, not a thermodynamic one.
\end{consequence}

The data confirm both halves. A strong majority ($m=0.5$) is read perfectly ($P=1$) at every $N$;
only the resolution of a \emph{weak} majority degrades with $N$, and $\beta_{\rm eff}(N)$ \emph{is}
that resolution. Models split cleanly by whether their resolution decays (Table~\ref{tab:demarzo}):
the reasoning models Opus-4.8 and GLM-5.2 count perfectly ($\beta_{\rm eff}$ pegged, $N_c=\infty$),
DeepSeek holds a flat $\beta_{\rm eff}\approx1.9>1$ ($N_c=\infty$, our pre-registered lower bound),
while GPT-4o, Llama and GPT-4-Turbo show a decaying $\beta_{\rm eff}$ and hence a finite $N_c$. The
level is a red herring: GPT-4o's $\beta_{\rm eff}\approx7$ far exceeds DeepSeek's $1.9$ yet GPT-4o
loses consensus first, because only the \emph{asymptote} relative to $1$ matters.

This aligns with
the Weber/approximate-number-system law~\cite{weber}: numerosity discrimination depends on the
\emph{ratio} $n_k/n_z=(1+m)/(1-m)$, which is $N$-independent at fixed majority fraction, so a perfect
ratio-perceiver would have flat $\beta_{\rm eff}$ and $N_c=\infty$; a finite $N_c$ is a deviation from
that ideal (attention mass $1/N$ diluted over a long list). The literature supports the substrate (LLM magnitude representations are log-compressive, Weber-like~\cite{weber}), though it cautions that
this representational geometry does not by itself guarantee ideal behavioural ratio-perception, so we
lean on it as motivation rather than proof. De Marzo et al.\ themselves attribute the $\beta$-decline to an
information-processing limit, and report $N_c$ correlating with MMLU ($r\approx0.75$)---capability,
not thermodynamics.

\begin{table}[t]
\centering
\begin{tabular}{lccc}
\hline
model & $\beta_{\rm eff}(N)$ trend & our $N_c$ & De Marzo \\
\hline
Opus-4.8 & pegged (perfect count) & $\infty$ & --- \\
GLM-5.2 & pegged & $\infty$ & --- \\
DeepSeek-chat & flat $\approx1.9$ & $\infty$ & --- \\
GPT-4-Turbo & slow decay & $\sim1600$ & $\gtrsim1000$ (l.b.) \\
GPT-4o & decay & $\sim900$ & unreported \\
Llama-3.3-70B & decay & $\sim1600$ & $\approx50$ (Llama-3) \\
\hline
\end{tabular}
\caption{\textbf{The critical group size is set by whether perception resolution decays.} Models
whose $\beta_{\rm eff}(N)$ stays above $1$ have $N_c=\infty$; those whose resolution decays through
$1$ have a finite $N_c$, recovered from the finite-$N$ naming-game dynamics on the fitted response.
The load-bearing result is the \emph{finite/infinite split}, which is consistent with De Marzo et
al.\ wherever the models overlap: the one model they pin as finite (Llama-3) we also find finite, and
the models we find flat post-date their study. Absolute values we do
\emph{not} claim to match: our static probe over-estimates $\beta_{\rm eff}$ at small $N$, we used
Llama-3.3 rather than their Llama-3, and De Marzo et al.\ pin only Llama-3-70B ($\approx50$) with the rest
lower bounds, so the fine ordering is not established either way. GPT-4-Turbo's $\sim1600$ is at
least consistent with their $\gtrsim1000$ lower bound.}
\label{tab:demarzo}
\end{table}

We probe the mechanism interventionally. Handing GPT-4o the explicit tally ($n_k$, $n_z$) in the
prompt (removing the list-reading load while leaving the social decision unchanged) \emph{flattens}
its $\beta_{\rm eff}(N)$ from a decay to a pegged constant, sending $N_c\to\infty$; DeepSeek, already
flat, is unchanged. The decay that generates a finite $N_c$ is therefore the list-to-count perception
step, not the opinion dynamics. (The intervention also lengthens the prompt by one sentence, a
length/salience change we did not separately control; a pure-perception control that asks only which
opinion is \emph{more common} (no adoption) corroborates, giving GPT-4o a flat $\beta$ in $N$, so
the effect is not a generic prompt-length artefact.) A consequence for this paper's own agenda: $N_c$ is \emph{not} a
finite-size-scaling critical point; it is a bifurcation in an externally driven perception control
parameter, with no diverging correlation length, so a data-collapse in $N$ is the wrong lens here;
one should measure $\beta_{\rm eff}(N)$ and locate its crossing of $1$.

\begin{figure}[t]
  \centering
  \includegraphics[width=0.9\linewidth]{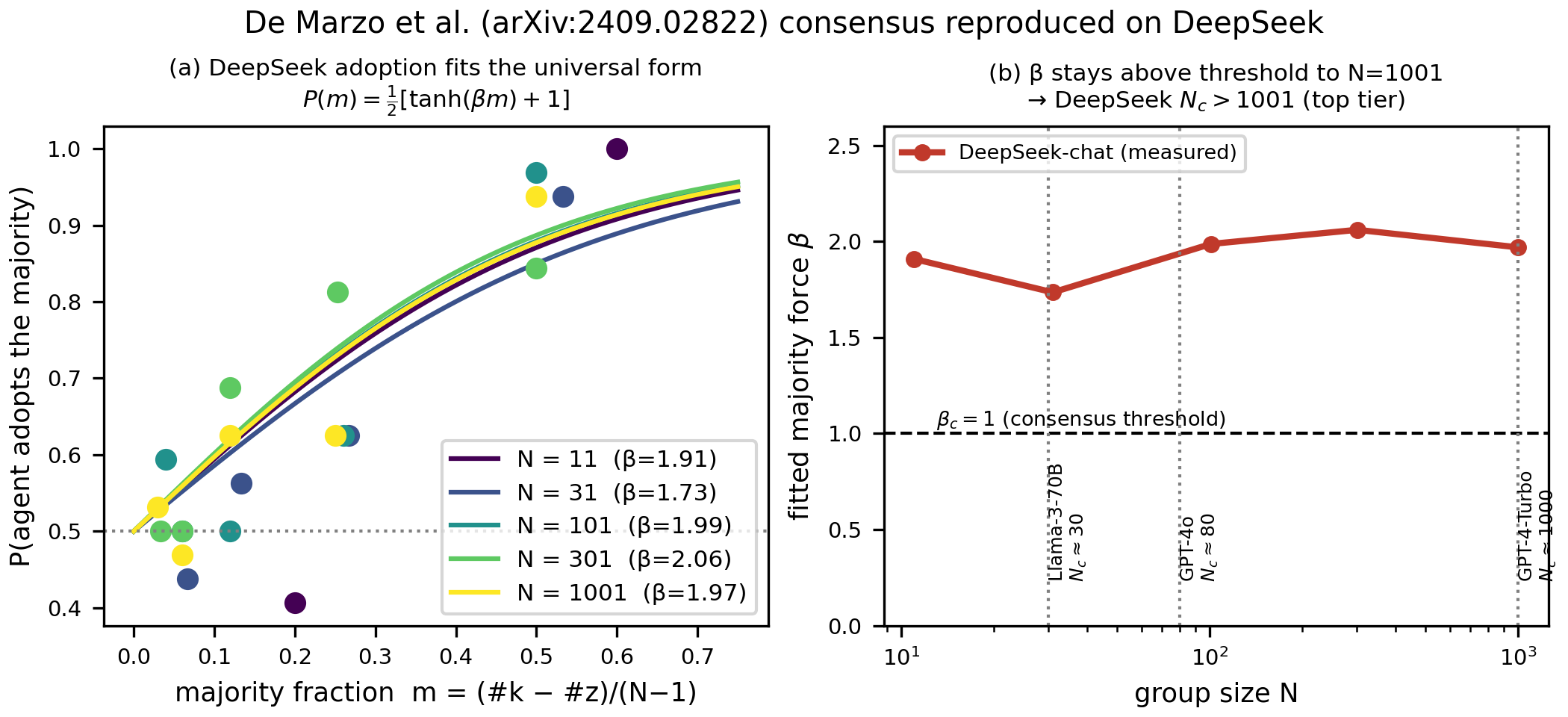}
  \caption{\textbf{De Marzo consensus is a perception crossing.} The published adoption form transfers
  to a new model (left); the majority force $\beta_{\rm eff}(N)$ is the perception resolution, and a
  finite critical group size exists only where it decays through the threshold $\beta_c=1$ (right).}
  \label{fig:demarzo}
\end{figure}

\subsection{A measured memory kernel}\label{sec:memory}

Appendix~\ref{sec:targets}'s honest weakness is that only the assimilation rate is fitted while
the memory that drives polarisation is posited. We measure it. Eliciting DeepSeek attitude updates
given a controlled history of past interactions, we recover a discrete Mori--Zwanzig kernel by
regression, $a_{\text{next}}-a \approx \sum_\tau K(\tau)\,(p_{t-\tau}-a)$ (Figure~\ref{fig:memory}).
All three pre-registered predictions held. The current-interaction weight $K(0)=0.27$ is in the
range of the independently fitted $\mu=0.415$ and, tellingly, below it, since an existing history
damps the current move: the conviction-braking signature seen directly. The past-interaction tail
is real, at $47\%$ of $K(0)$, so the update is genuinely non-Markovian. And the tail is concentrated
in the first few lags and vanishes beyond, so a finite memory closure captures it. The memory axis is
thus grounded empirically in the LLM layer, alongside the perception axis.

\begin{figure}[t]
  \centering
  \includegraphics[width=0.7\linewidth]{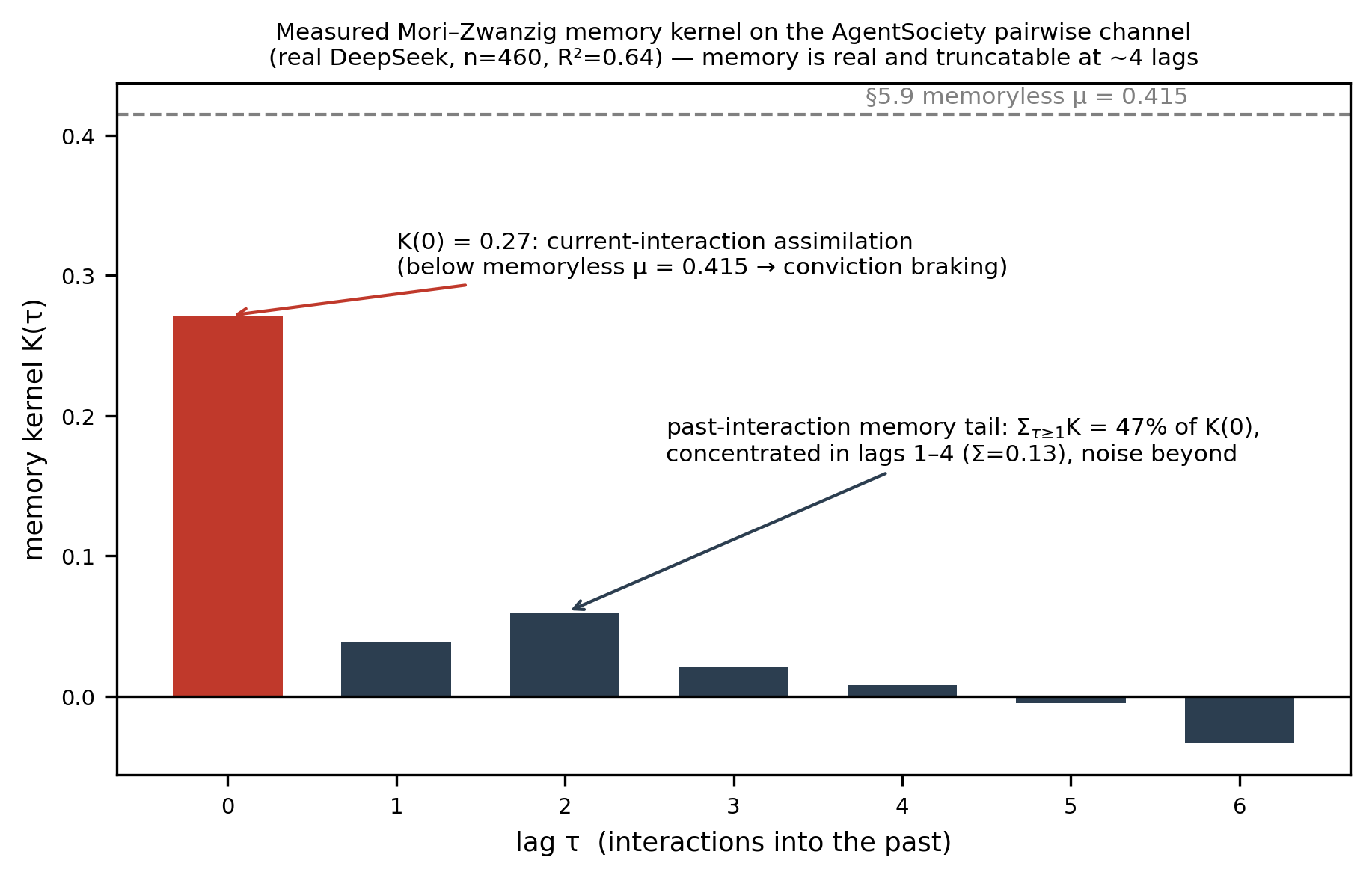}
  \caption{\textbf{A measured memory kernel.} The current assimilation sits below the memoryless
  rate (conviction braking); the past-interaction tail is real, decaying, and truncatable.}
  \label{fig:memory}
\end{figure}

\section{Discussion}\label{sec:discussion}

The results support a single organising claim. Whether a low-parameter surrogate can reproduce the
macroscopic observable of an LLM society is decided by the society's perception and memory design,
and the deciding structure can be read off before the simulation is run.

A global aggregate feed
places the society in a mean-field cell where a scalar surrogate reproduces the observable with an
error that vanishes as $N^{-1/2}$; a community or graph-structured feed places it in cells where the
error is $O(1)$ or grows, and where the surrogate must resolve the responsible structure through a
block, graphon, or pair-plus-memory closure. Two axes beyond interaction order and memory matter in
practice: a shared driver that itself fluctuates, and a curved per-agent response that breaks
coarse-graining through Jensen's inequality even under a private feed. Empirically the operative
control turned out to be the recommender: because it sets a simulation's perception design, it is
what moves a society across the boundary between the regimes where cheap modelling succeeds and where
it fails (Appendix~\ref{sec:targets} and Section~\ref{sec:perception}).

The cheap surrogate is not only a cost-saving device but an instrument. Because its few parameters are
estimated rather than tuned, the value that reproduces a macroscopic law becomes a measurement of
what produces that law: a single labour-cyclicality coefficient for EconAgent's Phillips curve, a
reasoning step rather than phrasing for its magnitude, a saturating societal response for Williams'
epidemic, a decaying memory kernel for AgentSociety's polarisation, and a perception resolution
$\beta_{\rm eff}(N)$ whose crossing of $1$ sets De Marzo's critical group size. Turning the surrogate's fit and
its failures into measurements is, in our view, the more durable contribution.

\section{Limitations}\label{sec:limitations}

Several limitations bound the claims. The named-target reproductions are mechanism-and-scaling
matches, and the two that go further reproduce a target's published \emph{functional form} on a new
model rather than its own model's published number; the strongest test (running a target's exact models to hit its published macroscopic number) remains open. The EconAgent market is a
code-calibrated reimplementation validated against the released non-LLM baseline, and the DeepSeek
decisions drive the agents rather than the market mechanics. The commutation decomposition is an
organising heuristic, not a theorem. The pre-registrations bind specific numeric predictions with
named kill criteria, but each was committed to our own repository together with its result, so the
git history does not by itself separate prediction from data. We document this candidly and archive
the pre-registration bundle, with per-file SHA-256 hashes, at an external timestamped DOI~\cite{prereg};
this anchors the content immutably to a third party, though it dates the deposit, not the prediction. Finally, the LLM-layer results rest on one primary model, and where sibling models were
tested the macroscopic value moved, which is itself part of the finding rather than a nuisance.

\section{Conclusion}\label{sec:conclusion}

We have argued that large LLM societies can, for macroscopic purposes, be modelled without a large
compute budget, and that whether this works is decided by a perception-ordered taxonomy that maps a
simulation's design to an effective theory and a predicted trend of the surrogate error with $N$. We
tested the taxonomy against blind, pre-registered predictions on both of its axes at the LLM layer,
traced its two refuted predictions quantitatively to response curvature, showed the classification to
be automatable, quantified the elicitation and capacity cost of the macroscopic law through a
distillation scaling law, and reproduced eight named LLM simulations and a differentiable agent-based
model on genuine, cheaply elicited decisions.

The recurring lesson is that the surrogate's fit is a
measurement: the microscopic property that a macroscopic observable depends on is exposed, not
hidden, by replacing the expensive agent with a cheap one.

\subsection*{Reproducibility}
Every result has a runner and a cached decision trace, so seeded results reproduce deterministically
and the LLM experiments re-run at no cost; the perception and scaling primitives, the
pre-registrations with their outcomes, and the figure scripts are included. Total elicitation across
the study is a few dollars of DeepSeek.

\subsection*{Declarations}
\noindent\textbf{Competing interests.} The author declares no competing interests.

\smallskip\noindent\textbf{Funding.} This research received no external funding.

\smallskip\noindent\textbf{Data and code availability.} The code, runners, and cached decision traces
that reproduce every result are openly available at
\href{https://github.com/YehudaItkin/poor-mans-agentic-modeling}{github.com/YehudaItkin/poor-mans-agentic-modeling};
the accompanying systematic review and scaling toolkit are archived on Zenodo~\cite{review}.

\clearpage
\appendix

\section*{Appendices}

\section{Proof of Proposition~\ref{prop:floor} (community floor)}\label{app:floor}

\begin{proof}
Write $D_b=A(g^\ast+\delta_b)-A(g^\ast)$ for the deviation of community $b$'s aggregate
response, with $\delta_b$ independent and $\mathcal N(0,\sigma^2)$. The realised population
aggregate under a $B$-community feed is
\[
  \frac1B\sum_b A(g^\ast+\delta_b)=A(g^\ast)+\frac1B\sum_b D_b ,
\]
and the scalar surrogate predicts $A(g^\ast)$, so its error is
$\bigl\lvert\frac1B\sum_b D_b\bigr\rvert$. If $A$ is affine then $D_b$ is Gaussian with mean
$m_1$ and variance $v_1$, and $\frac1B\sum_b D_b\sim\mathcal N(m_1,v_1/B)$ exactly; for smooth
nonlinear $A$ the same holds to leading order, by the central limit theorem in $B$ and a
second-order expansion in $\sigma$. The floor is then the folded-normal mean
\[
  \E\bigl\lvert\mathcal N(m_1,v_1/B)\bigr\rvert
  =\sqrt{\frac{2v_1}{\pi B}}\;e^{-m_1^2 B/(2v_1)}
   +m_1\operatorname{erf}\!\left(m_1\sqrt{\tfrac{B}{2v_1}}\right).
\]
As $B\to\infty$ this tends to $\lvert m_1\rvert$; when $m_1=0$ it equals
$\sqrt{2v_1/(\pi B)}\propto B^{-1/2}$.
\end{proof}

\section{Proof of Proposition~\ref{prop:knee} (the knee $N^\ast$)}\label{app:knee}

\begin{proof}
A private feed assigns each of the $N$ agents an independent draw, i.e.\ $B=N$, so the error is
$\E\lvert\mathcal N(m_1,v_1/N)\rvert$. The folded-normal mean has two regimes,
\[
  \E\bigl\lvert\mathcal N(m_1,v_1/N)\bigr\rvert\approx
  \begin{cases}
    \sqrt{2v_1/(\pi N)}, & \sqrt{v_1/N}\gg\lvert m_1\rvert\quad(\text{decays as }N^{-1/2}),\\[6pt]
    \lvert m_1\rvert,    & \sqrt{v_1/N}\ll\lvert m_1\rvert\quad(\text{constant}).
  \end{cases}
\]
The regimes cross over at $v_1/N=m_1^2$, i.e.\ $N^\ast=v_1/m_1^2$. A second-order expansion of $A$
about $g^\ast$ gives
\[
  m_1=\tfrac12 A''(g^\ast)\sigma^2+O(\sigma^4),\qquad
  v_1=A'(g^\ast)^2\sigma^2+O(\sigma^4),
\]
whence
\[
  N^\ast=\frac{4\,A'(g^\ast)^2}{A''(g^\ast)^2\,\sigma^2}.
\]
\end{proof}

\section{Proof of Proposition~\ref{prop:commute} (exact commutation at the mean-field cell)}\label{app:commute}

\begin{proof}
Averaging the update over the population,
\[
  g_{t+1}=\frac1N\sum_i x_{i,t+1}=a\,g_t+f(g_t)+\frac1N\sum_i\xi_{i,t},
\]
because every agent perceives the same $g_t$, so $f(g_t)$ is a constant pulled out of the sum.
The noise average has mean $0$ and variance $\sigma^2/N$, so $\E[g_{t+1}\mid g_t]=a\,g_t+f(g_t)$
exactly, with an $O(N^{-1/2})$ fluctuation; the response curvature never enters. If instead agent
$i$ perceives $g_t+\varepsilon_i$ with $\varepsilon_i$ mean-zero and symmetric of variance $\tau^2$, the mean
update carries
\[
  \frac1N\sum_i f(g_t+\varepsilon_i)=f(g_t)+\tfrac12 f''(g_t)\tau^2+O(\tau^4),
\]
an $O(1)$ bias that does not average away: the Jensen term of Proposition~\ref{prop:floor}. A
dependence of $x_{i,t+1}$ on the history adds a memory term, and replacing $g_t$ by a
neighbourhood average makes the mean an insufficient statistic, adding the interaction-order term.
\end{proof}

\section{Proof of Proposition~\ref{prop:demarzo} (consensus threshold and its crossing)}\label{app:demarzo}

\begin{proof}
The map $m\mapsto\tanhh(\beta m)$ has derivative $\beta\operatorname{sech}^2(\beta m)$, which at
the disordered fixed point $m=0$ equals $\beta$. The map is odd, so $m=0$ is stable for $\beta<1$
and loses stability in a supercritical pitchfork at $\beta_c=1$. For the amplitude, expand
\[
  \tanhh(\beta m)=\beta m-\tfrac13\beta^3 m^3+O(m^5);
\]
a nonzero self-consistent root of $m=\tanhh(\beta m)$ then satisfies $1=\beta-\tfrac13\beta^3 m^2$,
i.e.
\[
  m^{\ast2}=\frac{3(\beta-1)}{\beta^3},\qquad\text{so}\qquad
  m^\ast\simeq\sqrt{3(\beta-1)}\ \text{ as }\beta\downarrow1.
\]
The existence of the ordered branch is governed by $\beta$ alone and is independent of $N$. A
finite critical group size can therefore arise only if the \emph{measured} slope
$\beta_{\rm eff}(N)$ (the resolution with which an agent reads a weak majority in a list of $N$
opinions) decays through $1$, which is a property of perception, not of the thermodynamic limit.
\end{proof}

\section{External validation suite}\label{app:suite}

The main text carries the load-bearing experiments: EconAgent, the two blind tests, De Marzo, and the measured memory kernel. This appendix collects the remaining validations and supporting checks: further named LLM simulations spanning the other perception cells, a cross-domain comparison against automatic differentiation, a distillation scaling law, an automated cell classifier, and two negative results. Each is a self-contained confirmation of the taxonomy, not a separate study.

\subsection{Named targets across the perception cells}\label{sec:targets}

We now classify and reproduce named LLM simulations, each on genuine LLM decisions (DeepSeek unless noted), using
the recipe: a low-parameter response, the minimal closure named by its cell, and the cell's
predicted $N$-trend. None matches a target's exact same-model published number, which would require
running that target's own models; Section~\ref{sec:demarzo} and Appendix~\ref{sec:williams} come closest by
reproducing a target's published functional form on a new model.

\textbf{AgentTorch: behaviour global, contagion local.}
AgentTorch~\cite{agenttorch} is a million-agent LLM epidemic model (its case study is COVID-19 in New
York City); it reaches that scale by querying the LLM once per demographic archetype and
broadcasting the answer. We confirm from real DeepSeek archetype decisions why this works: the
isolation behaviour is a global feed, so the archetype error shrinks with the number of archetypes,
whereas contagion runs on a contact graph, where a well-mixed surrogate over-predicts the peak and
the break is driven by clustering rather than by the degree tail (Figure~\ref{fig:agenttorch}).

\begin{figure}[t]
  \centering
  \includegraphics[width=0.98\linewidth]{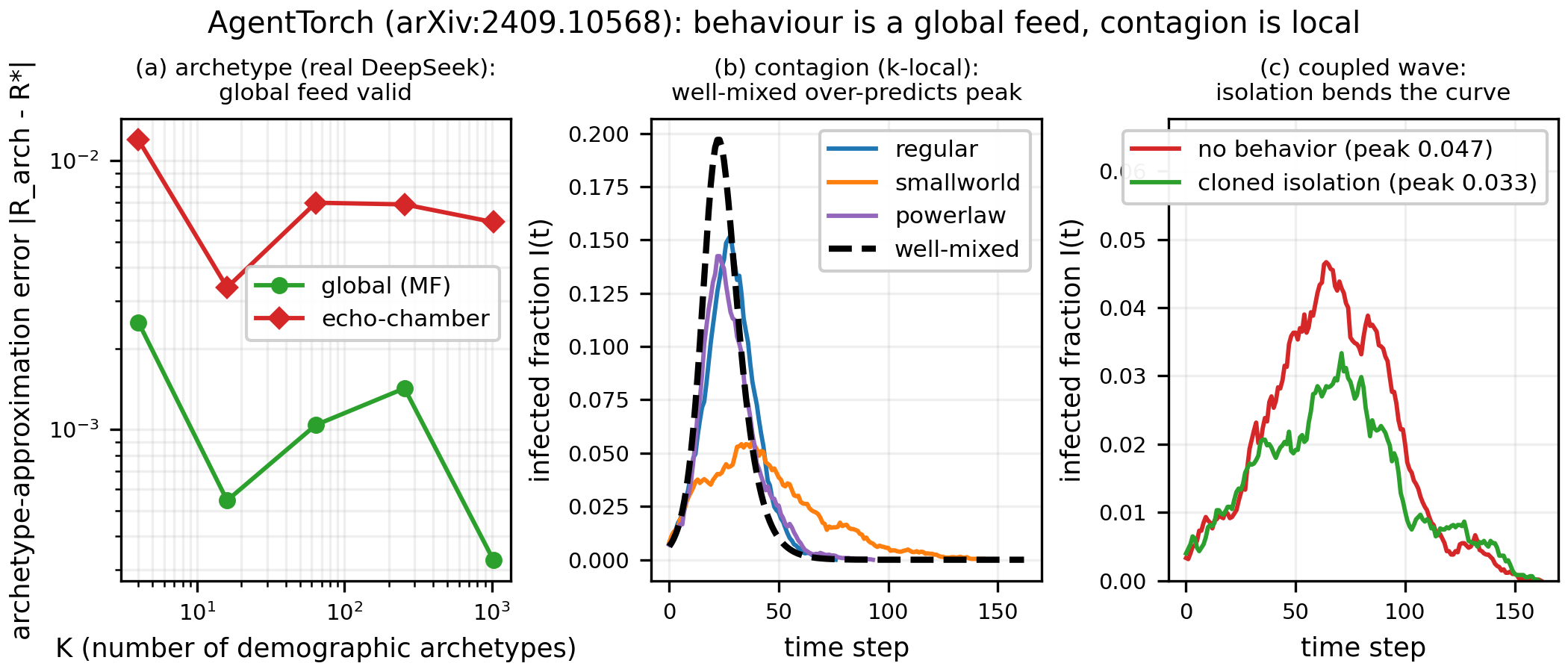}
  \caption{\textbf{AgentTorch spans two cells.} Behaviour is a global feed where the archetype trick
  is valid; contagion is local, where a well-mixed surrogate over-predicts the peak and clustering,
  not the degree tail, drives the break.}
  \label{fig:agenttorch}
\end{figure}

\textbf{OASIS: the recommender is the switch.}
OASIS~\cite{oasis} is a social-media simulation of up to a million agents, modelled on X and Reddit;
it ships two recommender modes, and the mode is the perception switch. Its Reddit hot-score feed
is a single global leaderboard: the cloned herd experiment converges to its mean-field value with an
error that effectively vanishes. Its interest feed is a per-community echo chamber: the
group-polarisation error is $O(1)$ and falls as $B^{-1/2}$ in the number of communities, and a
block-aware surrogate repairs it (Figure~\ref{fig:oasis}). The stance response fitted from $800$ real
DeepSeek decisions weights the agent's own prior ($\alpha=1.17$) above the feed ($\kappa=0.60$), yet
the community feed still breaks the mean field. This response is odd about the operating point, so its
Jensen bias vanishes (Consequence~\ref{cons:floor}) and the break here is a pure interaction-order
effect, not curvature.

\begin{figure}[t]
  \centering
  \includegraphics[width=0.98\linewidth]{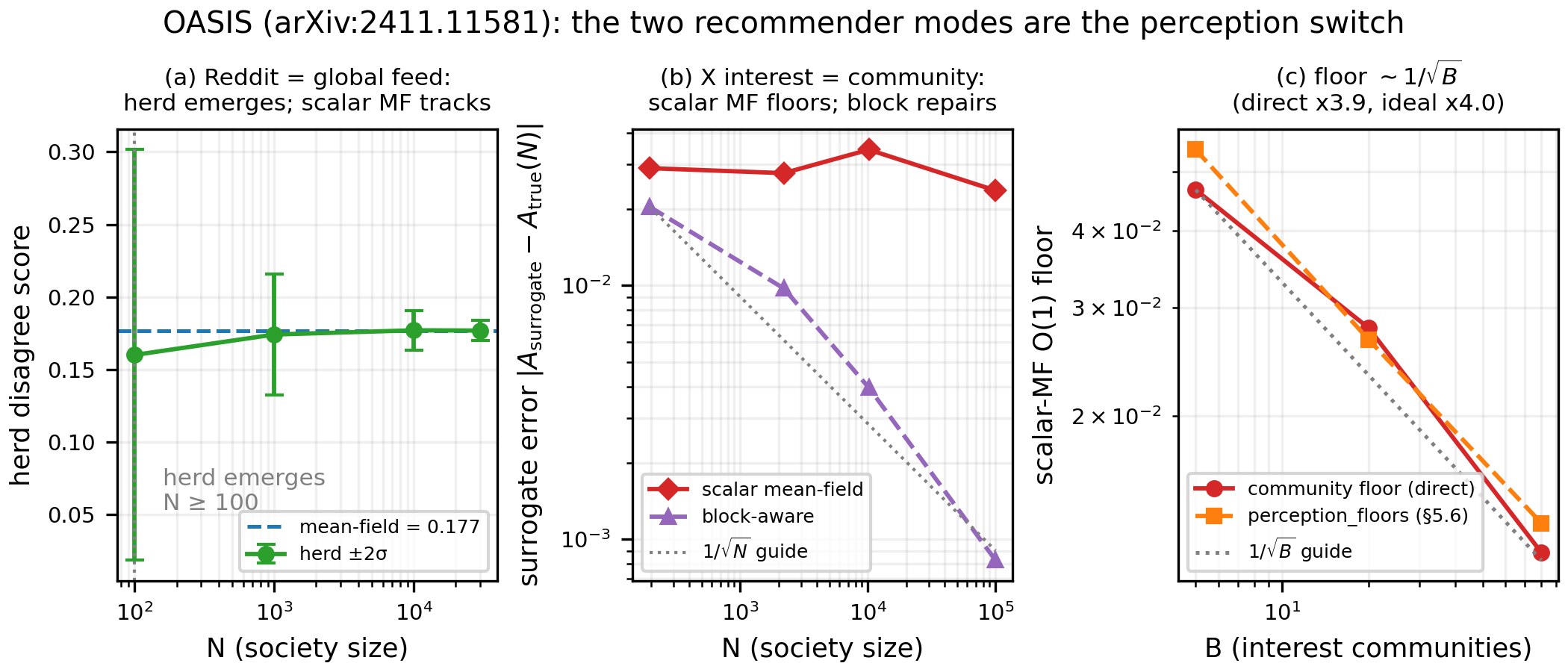}
  \caption{\textbf{OASIS: the two recommender modes are the perception switch.} Global hot-score
  feed $\to$ mean field; per-community interest feed $\to$ an $O(1)$ floor falling as $B^{-1/2}$.}
  \label{fig:oasis}
\end{figure}

\textbf{AgentSociety: the hardest cell.}
AgentSociety~\cite{agentsociety} simulates over ten thousand agents in a data-grounded urban society,
and sits in the local, long-memory cell. A scalar mean field cannot represent its
between-block polarisation at all (the first-moment closure returns zero, so its error is the
polarisation itself), and the error grows with $N$ under a densifying interaction schedule while
remaining a bounded $O(1)$ floor at fixed degree. A pair-plus-memory closure repairs it, and an
ablation of the conviction memory collapses most of the polarisation, isolating the memory axis
(Figure~\ref{fig:agentsociety}). Only the assimilation rate ($\mu=0.415$) is fitted from data here;
Section~\ref{sec:memory} measures the memory dependence that the rest of the mechanism posits.

\begin{figure}[t]
  \centering
  \includegraphics[width=0.98\linewidth]{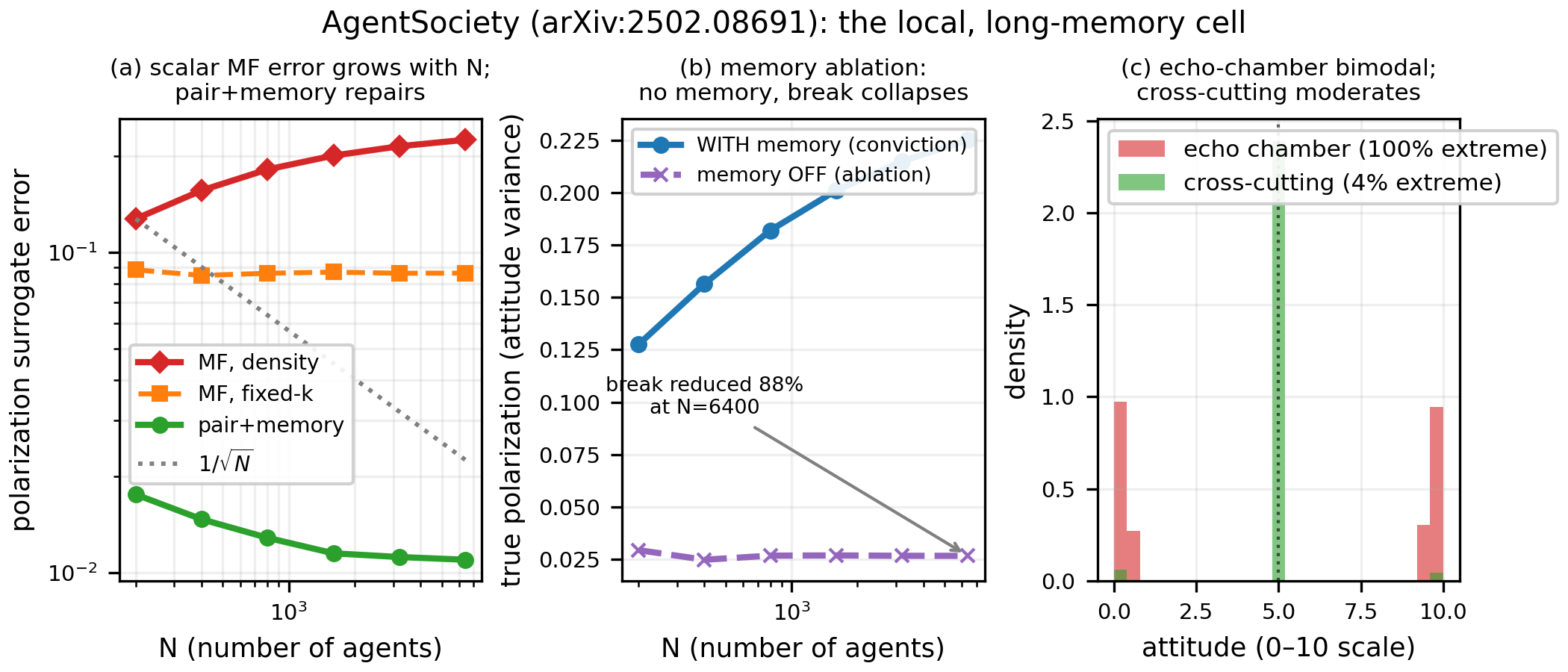}
  \caption{\textbf{AgentSociety in the local, long-memory cell.} Scalar mean-field error grows under
  densification and floors at fixed degree; a pair-plus-memory closure repairs it, and removing the
  conviction memory collapses the break.}
  \label{fig:agentsociety}
\end{figure}

\subsection{A cross-domain check: closure versus autodiff}\label{sec:gradabm}

The recipe is not specific to LLM agents. GradABM~\cite{gradabm} makes a million-agent epidemic differentiable and
calibrates it by gradient descent on GPUs. For the aggregate mortality curve this is more machinery
than the science needs. On a self-generated network epidemic we lift a heterogeneous mean field from
susceptible-infected to the full compartmental model, thirteen degree blocks coupled through a
degree-weighted infectious fraction, and calibrate two parameters by a derivative-free simplex. It
recovers the transmission rate to within $13\%$ and the fatality rate to within $6\%$ at a
mortality-curve RMSE of $22$, in $0.34$ seconds on a laptop, against a well-mixed baseline that
misfits at RMSE $48$ (a factor of $2.2$) and against roughly $400$ seconds of GPU
forward-time per county for the differentiable model (Figure~\ref{fig:gradabm}). We recover the
method and the planted parameters, not the authors' number on their data.

\begin{figure}[t]
  \centering
  \includegraphics[width=0.98\linewidth]{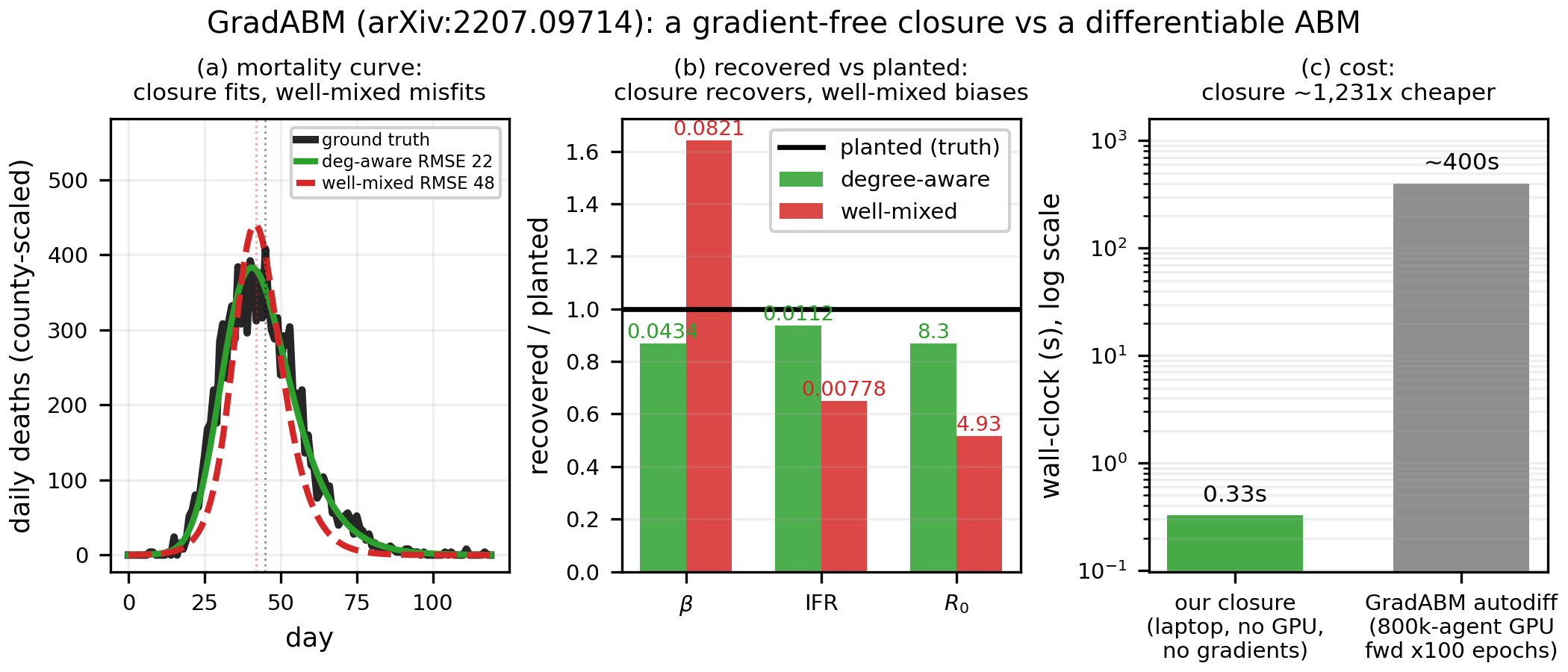}
  \caption{\textbf{Closure versus autodiff.} A gradient-free degree-aware closure recovers the
  mortality curve and the planted parameters roughly $10^3$ times faster than the differentiable
  million-agent model, for the aggregate observable.}
  \label{fig:gradabm}
\end{figure}

\subsection{A distillation scaling law}\label{sec:scaling}

How much LLM does the macro law cost? Offline from the cached EconAgent trace we measure the Phillips
error as a joint function of the elicitation budget $B$, the surrogate capacity $p$, and the
population $N$ (Figure~\ref{fig:scaling}). The error falls with budget, reaching tolerance by a
couple of thousand decisions; it falls with capacity and plateaus at four features; and the two axes
are approximately separable, the error grid being ninety percent rank-one in log-error. The
load-bearing feature is a capacity floor: below four features the error plateaus above tolerance for
any budget, so the surrogate is capacity-limited and buying more decisions cannot help, whereas above
it the surrogate is data-limited and converges. The purchasing rule is therefore to buy structure
first and data second: four features and one to two thousand decisions, well under a dollar of DeepSeek.

\begin{figure}[t]
  \centering
  \includegraphics[width=0.98\linewidth]{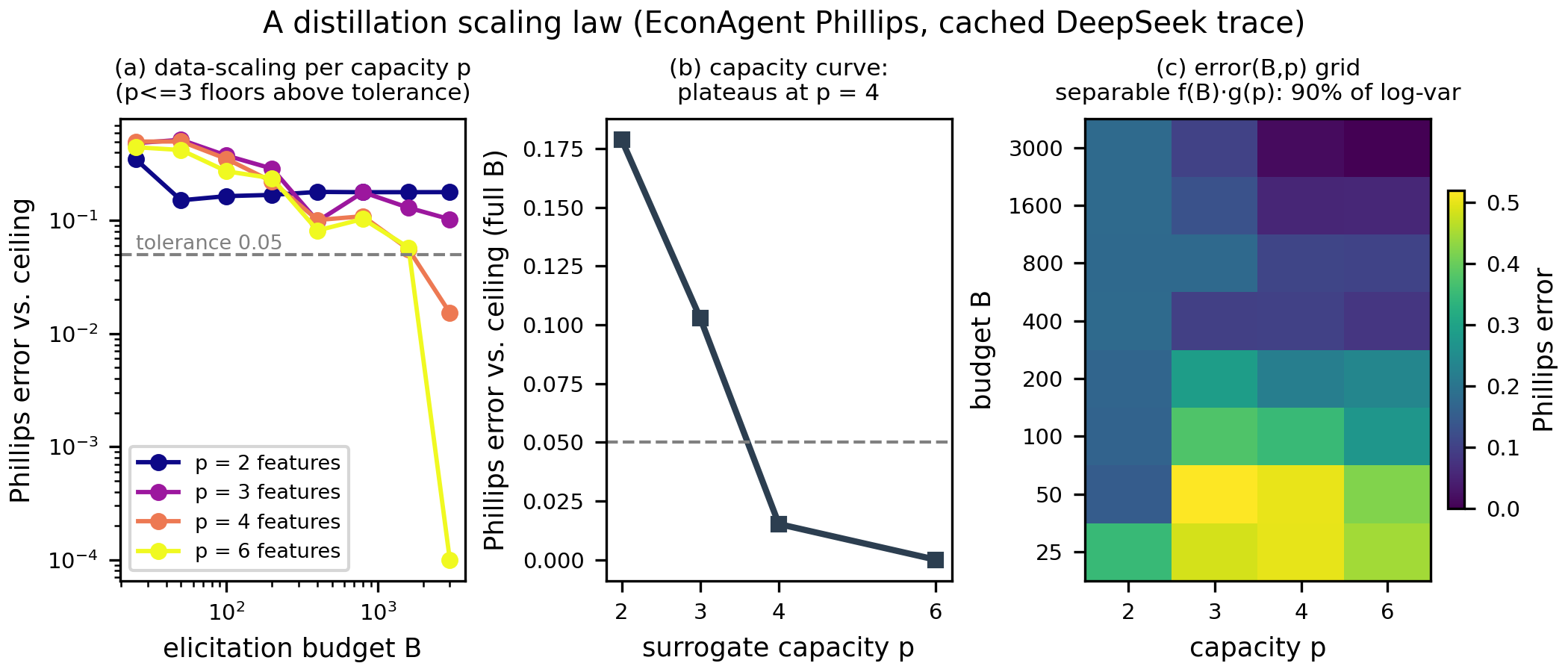}
  \caption{\textbf{A distillation scaling law.} The macro-observable error against elicitation
  budget $B$ and surrogate capacity $p$; a capacity floor no budget can cross, and an approximately
  separable grid.}
  \label{fig:scaling}
\end{figure}

\subsection{The classification is automatable}\label{sec:config2cell}

The cell assignments above were made by hand. They need not be. Given only a neutral description of a
simulation's perception and memory design, with no taxonomy term leaked, an LLM emits the structured
specification that \texttt{classify\_cell} consumes; the resulting cell matched the hand assignment
in all eight cases we tested, including Generative Agents, correctly placed in the local,
long-memory cell; the classifier saw no taxonomy term, though we wrote the neutral descriptions and
cannot rule out that these published systems appear in its training data. The classificatory layer is a pre-flight
check for which surrogate will work, runnable before any expensive simulation.

\subsection{Williams et al.: a generative epidemic in the mean-field cell}\label{sec:williams}

Williams et al.~\cite{williams} give generative agents a daily stay-home decision from their symptoms
and a town-wide broadcast of new-infection prevalence. They publish three signatures: a stay-home
logistic regression (positive on symptoms, positive on prevalence, negative on prevalence squared,
i.e.\ a saturating response); a flattened epidemic curve; and a shift from a single wave to several
as the reproduction number falls. From $462$ real DeepSeek decisions on their verbatim prompt we
reproduce all three levels (Figure~\ref{fig:williams}). The fitted logistic has the published sign
structure, including the negative squared term ($-1.77$, a stronger saturation than their $-0.65$),
tying the epidemic's societal channel to the same response curvature that
Section~\ref{sec:perception} isolated. Dropping the fitted response into a behaviour-coupled
compartmental model flattens the peak to a seventh of the no-feedback baseline and roughly doubles
the duration, with a single wave at high reproduction number and multiple waves at low. And the cell
is mean field: the scalar surrogate's error against a finite-$N$ stochastic model shrinks with $N$.

\begin{figure}[t]
  \centering
  \includegraphics[width=0.98\linewidth]{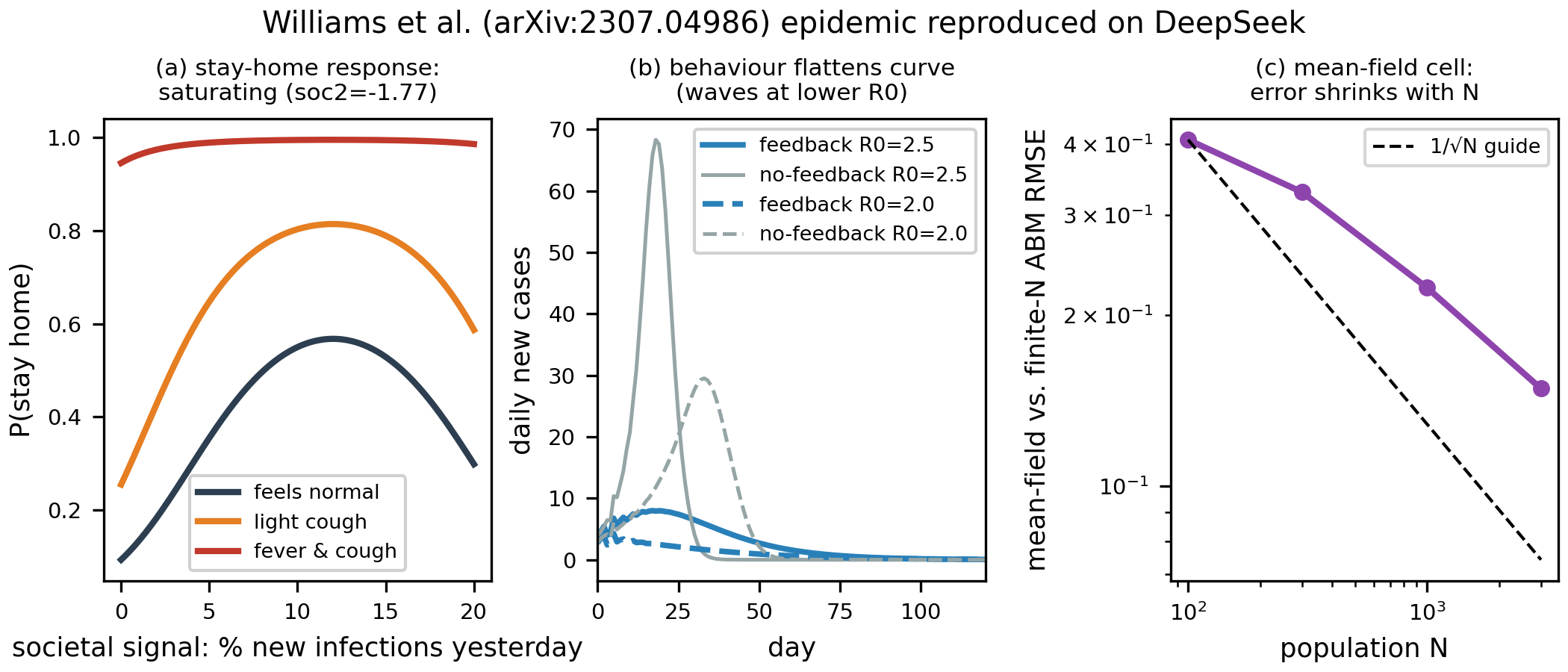}
  \caption{\textbf{Williams et al.\ epidemic on DeepSeek.} A saturating stay-home response
  reproduces the published form; the behaviour-coupled model flattens the curve and shows waves; the
  surrogate error shrinks with $N$.}
  \label{fig:williams}
\end{figure}

\subsection{LLMTraveler: filling the empty cell}\label{sec:traveler}

Every target so far has short memory except AgentSociety, which is $k$-local. The
[mean-field $\times$ long-memory] cell (a global signal with an accumulated history) was empty.
LLMTraveler~\cite{traveler} fills it: LLM commuters play a day-to-day congestion game, each day
seeing the exponentially weighted moving average of their travel time on each route (the memory)
and choosing a route, with congestion felt only through the aggregate flow.

The published result is
that mean travel times converge to the Dynamic User Equilibrium (DUE), most models landing within
$\pm10\%$ (GPT-4o $+1.33\%$). We elicit real DeepSeek route choices and fit a two-parameter rule
$P(\text{switch})=\sigma(\beta\,\Delta-\gamma)$ in the travel-time gap $\Delta$
(prereg~\texttt{prereg\_llmtraveler\_2026-07-03.md}, Figure~\ref{fig:traveler}). The rule is rational
($\beta=+0.40$, it prefers the faster route) though it shows little of the inertia the authors
report ($\gamma\approx0$ on DeepSeek, a model-dependent difference). Run as a day-to-day dynamic on
the 16-traveler two-route network, it \emph{converges to the DUE at $+4.7\%$}, inside the published
$\pm10\%$ band, with persistent switching around equilibrium as the authors also observe. The cell
is confirmed too: the equilibrium gap $\lvert\text{mean}-\text{DUE}\rvert$ shrinks with the number
of travelers (a mean-field flow convergence), so the surrogate reproduces the Wardrop user
equilibrium~\cite{wardrop} (the flow at which no traveler can reduce their own travel time by switching route
unilaterally, here the same fixed point as the DUE above) at any $N$ on a laptop.

\begin{figure}[t]
  \centering
  \includegraphics[width=0.98\linewidth]{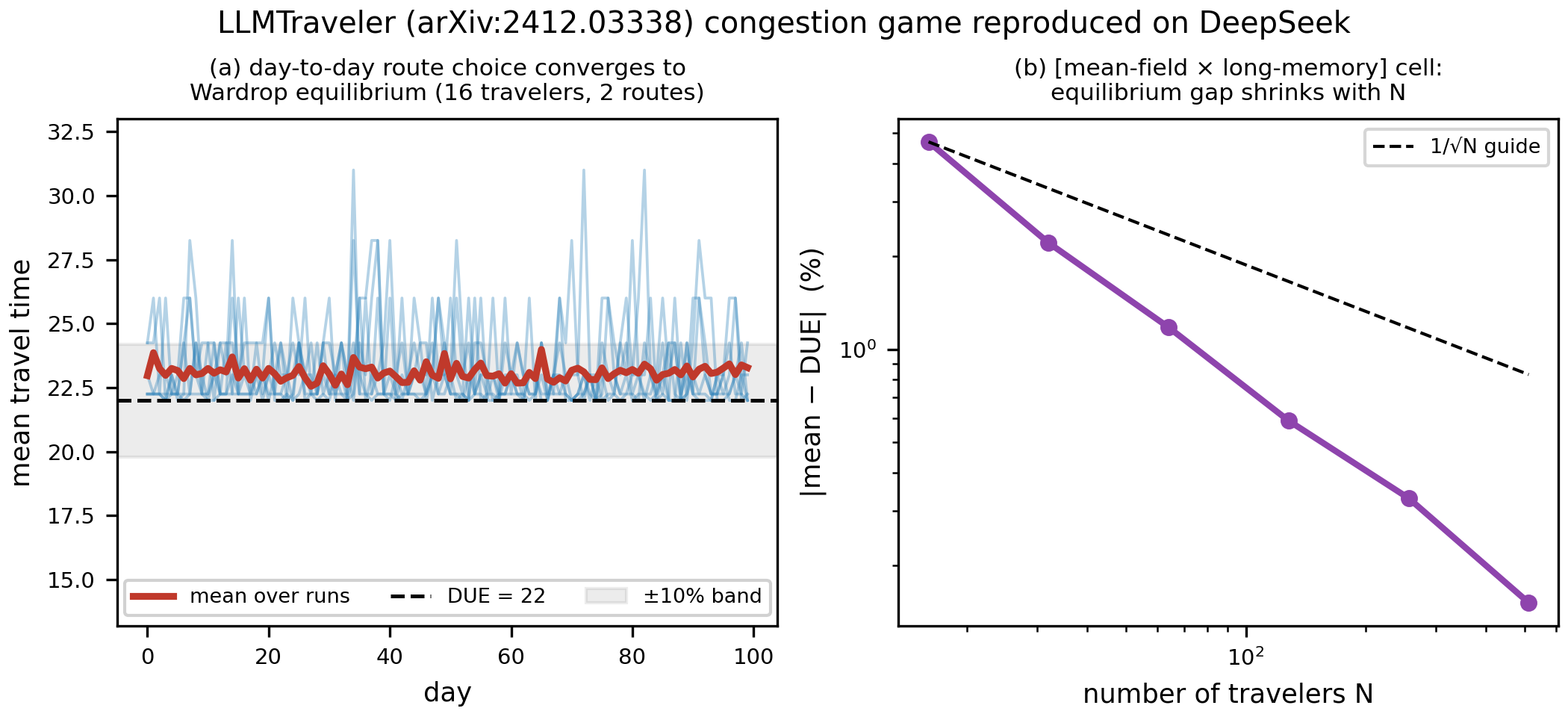}
  \caption{\textbf{LLMTraveler in the mean-field $\times$ long-memory cell.} A two-parameter route
  rule fitted to real LLM choices converges to the Dynamic User Equilibrium within the published
  $\pm10\%$ band (left); the equilibrium gap shrinks with the number of travelers (right).}
  \label{fig:traveler}
\end{figure}

\subsection{TwinMarket: the stylised facts need the market, not just the agent}\label{sec:twinmarket}

Not every target yields to a compact surrogate, and the failure is informative. TwinMarket~\cite{twinmarket} reports that GPT-4o BDI investors reproduce the canonical stylised facts of
financial returns (fat tails and volatility clustering) from a rich belief--desire--intention
state with a social feed. We elicit real trading decisions (DeepSeek and GPT-4o) on a compact state
(recent trend, peer signal, own P\&L) and fit a low-parameter trader
(prereg~\texttt{prereg\_twinmarket\_2026-07-03.md}). The trader recovers the behavioural \emph{signs}
(trend-following, herding on the peer signal, a disposition effect, selling into profit) on both models.

Yet dropped into a minimal price-impact market at a moderate coupling the trader does
\emph{not} generate fat tails: the returns stay near-Gaussian (excess kurtosis $\approx0$), because
the elicited herding leaves the market subcritical. The boundary here is not the trader's capacity
but an \emph{un-elicited} ingredient: the strength of the price-impact coupling. Raising that
coupling makes the \emph{same} weak trader supercritical (excess kurtosis rises past $7$), so the
stylised facts are under-determined by the elicited response alone: reproducing them needs the market
mechanism, not just the agent, and our compact surrogate does not fit that mechanism from LLM
decisions. We report this honestly as the limit of eliciting only the agent, distinct from the
data/capacity floor of Appendix~\ref{sec:scaling}.

\subsection{Smallville: the field's most-cited numbers}\label{sec:smallville}

Generative Agents~\cite{genagents} produced the most-cited macro numbers in the area: in Smallville's
25 agents, an invitation to a Valentine's party spreads by word-of-mouth to $13$ of $25$, and $5$
attend. This is the [$k$-local $\times$ long-memory] cell: a local acquaintance cascade with agents
remembering they were told. We elicit real DeepSeek transmission and attendance decisions, fit two
logits ($P(\text{tell})$ rising with tie strength; $P(\text{attend})$), and run the two-day cascade
on a $25$-node acquaintance graph (prereg~\texttt{prereg\_smallville\_2026-07-03.md}). The
reproduction is close but semi-quantitative rather than parameter-free: at a plausible acquaintance
degree ($\sim6$) the cascade reaches $\mathbf{11.4}$ of $25$ (published $13$) and $\mathbf{4.5}$
attend (published $5$). Because we lack Smallville's actual relationship graph, the reach depends on
the assumed network (from $\sim7$ at degree $4$ to $\sim19$ at degree $10$), so a realistic social
degree ($6$--$8$) brackets the published $13$ but we do not claim the exact number.

What \emph{is}
robust is the cell: a well-mixed control that lets every informed agent talk to everyone reaches all
$25$, overshooting, whereas the local cascade does not: the diffusion is a property of the
\emph{local} interaction structure, exactly the $k$-local prediction.

\end{document}